\documentclass[11pt]{article}

\usepackage[margin=1in]{geometry}
\usepackage[numbers]{natbib}

\usepackage[utf8]{inputenc}
\usepackage[T1]{fontenc}
\usepackage{url}
\usepackage{booktabs}
\usepackage{multirow}
\usepackage{amsfonts}
\usepackage{amssymb}
\usepackage{amsmath}
\usepackage{amsthm}
\usepackage{nicefrac}
\usepackage{microtype}
\usepackage[dvipsnames]{xcolor}
\usepackage{graphicx}
\usepackage{tikz}
\usepackage{pgfplots}
\pgfplotsset{compat=1.18}
\usepackage{float}
\usepackage{algorithm}
\usepackage{algpseudocode}
\usepackage{dsfont}
\usepackage{etoc}
\usepackage{listings}
\lstdefinestyle{clusterdump}{%
  basicstyle=\ttfamily\scriptsize,
  breaklines=true,
  breakatwhitespace=false,
  columns=fullflexible,
  keepspaces=true,
  frame=single,
  framesep=3pt,
  xleftmargin=2pt,
  showstringspaces=false,
  postbreak=\mbox{\textcolor{gray}{$\hookrightarrow$}\space},
}

\newtheorem{theorem}{Theorem}
\newtheorem{proposition}[theorem]{Proposition}
\newtheorem{lemma}[theorem]{Lemma}
\newtheorem{corollary}[theorem]{Corollary}
\theoremstyle{definition}
\newtheorem{definition}[theorem]{Definition}
\newtheorem{remark}[theorem]{Remark}

\usepackage[hidelinks,hypertexnames=false]{hyperref}

\title{Frustratingly Simple Black-Box Adaptation of Language Models via Logit Bias}

\author{%
  Ofek I. Cohen \\
  Tel Aviv University \\
  \texttt{ofeki@mail.tau.ac.il}
  \and
  Lior Shani \\
  Google Research \\
  \texttt{liorshani@google.com}
  \and
  Aviv Rosenberg \\
  Google Research \\
  \texttt{avivros@google.com}
  \and
  Ankur Samanta \\
  Columbia University \\
  \texttt{as7416@columbia.edu}
  \and
  Tal Wagner \\
  Tel Aviv University \\
  \texttt{talwag@tauex.tau.ac.il}
  \and
  Yonathan Efroni \\
  Tel Aviv University \\
  \texttt{yefroni@tauex.tau.ac.il}
}

\date{}

\begin{document}
\setlength{\parskip}{0pt}

\maketitle
\etocdepthtag.toc{mainbody}

\begin{abstract}
Many organizations aim to adapt language models for internal use, both to improve performance on domain-specific tasks and to address privacy concerns around sensitive data. However, such adaptation remains non-trivial: it often requires operationally challenging fine-tuning of open-source models or ad hoc prompt optimization. We study a minimal alternative based on a simple API-level control: allowing users to bias the model’s logits with a user-defined vector. We develop a black-box method for learning a single context-independent logit-bias vector, added at every decoding step, without modifying model weights or requiring gradients. Starting from a KL-regularized reinforcement learning (RL) objective, we characterize when such a fixed logit-bias vector can approximate the optimal prefix-dependent correction and derive a closed-form inverse-propensity estimator from rollouts, rewards, and token probabilities. Empirically, this simple decoding-time intervention improves over base models on mathematical and reasoning benchmarks while using far fewer trainable parameters than conventional fine-tuning. Our results suggest that learned logit bias is a lightweight mechanism for adapting language models under minimal access requirements.

\end{abstract}

\begin{figure}[t]
  \centering
  \begin{tikzpicture}[
    box/.style={draw, rounded corners, minimum height=1cm, minimum width=2.2cm,
                align=center, font=\small},
    arrow/.style={->, thick, >=stealth},
    plus/.style={circle, draw, inner sep=1.5pt, font=\small\bfseries},
  ]
    \node[box] (ht) at (-2.8,0) {Input $(x,y_{1:t-1})$};
    \node[box] (lm)   at (0,0)    {Frozen LM};
    \node[box] (logits) at (3.2,0) {Logits $\ell_t$};
    \node[plus] (add) at (5.4,0) {$+$};
    \node[box, fill=blue!10] (delta) at (5.4,1.3) {Bias $\delta$};
    \node[box] (sm) at (7.6,0) {Softmax};
    \node[box] (sample) at (10.2,0) {Sample $y_t$};

    \draw[arrow] (ht) -- (lm);
    \draw[arrow] (lm) -- (logits);
    \draw[arrow] (logits) -- (add);
    \draw[arrow] (delta) -- (add);
    \draw[arrow] (add) -- (sm);
    \draw[arrow] (sm) -- (sample);
  \end{tikzpicture}
  \caption{Overview of logit-bias steering. The fixed logit-bias vector~$\delta$ is added to the base model's logits at every decoding step before sampling. Only~$\delta$ is optimized; model weights remain unchanged.}
  \label{fig:method}
\end{figure}

\section{Introduction}
\label{sec:intro}

Deploying frontier language models often requires adapting them to narrow operational needs: shorter reasoning traces, stricter output formatting, or improved accuracy on small, confidential in-house distributions~\citep{salemi2024lamp,salemi2024privacy,koo2024automata,willard2023efficient,beurerkellner2024guiding}. In many practical settings, however, the model is available only through an inference API: the deployer cannot inspect or modify weights, cannot backpropagate through the model, and must work with the controls exposed at decoding time~\citep{sun2022blackbox,sun2022bbtv2,deng2022rlprompt,hiranandani2025token}. This makes standard adaptation---fine-tuning, PEFT, prompt embeddings, RLHF-style optimization~\citep{houlsby2019adapters,li2021prefix,lester2021power,hu2021lora,benzaken2022bitfit,liu2022ia3,ziegler2019fine,ouyang2022training}---unavailable or mismatched to the setting.

\begin{center}
    \textit{Can a single learned logit-bias vector provide useful black-box adaptation?}
\end{center}

We study logit bias as a minimal adaptation primitive. Token-level logit-bias controls are documented in major inference stacks and hosted APIs~\citep{vllm_logit_bias,sglang_logit_bias,lmstudio_openai_compat,llamacpp_logit_bias,openai_logit_bias,azure_openai_logit_bias,fireworks_logit_bias}. We consider the simplest possible intervention: starting from a frozen base model with no learned logit bias, add a single context-independent logit-bias vector $\delta$ over the vocabulary to the model's logits at every decoding step. The base model's weights are never modified; only $\delta$ is learned, using black-box rollouts rather than gradients or access to model internals.

This intervention is deliberately weak. The fixed logit-bias vector $\delta$ cannot encode arbitrary prompt- or prefix-dependent behavior, unlike prompts that can express explicit conditional instructions~\citep{openai_prompt_engineering,azure_prompt_engineering}. But this weakness is also what makes the primitive attractive: when the desired correction is approximately token-specific and stable across contexts, a single vector may capture a useful slice of adaptation with little serving overhead. A particularly relevant case is verbosity control: although $\delta$ cannot decide which reasoning steps are mathematically necessary, it may still learn reusable finalization, formatting, and stopping cues that shorten completions without changing the model weights. Unlike prompt steering it consumes no context length and acts directly on the output distribution, and unlike weight-space adaptation it requires no gradients, hidden states, or provider-side training access~\citep{openai_api_pricing,liu2024lostmiddle,houlsby2019adapters,hu2021lora}.

We formalize this via a KL-regularized RL objective~\citep{ziegler2019fine,stiennon2020summarize,ouyang2022training,korbak2022rl,rafailov2023direct,zhou2025qsharp}; the unconstrained optimum induces a prefix-dependent correction that we approximate with the fixed logit-bias vector $\delta$, yielding a reward-gap bound and a closed-form IPS estimator from rollouts, rewards, and token probabilities~\citep{horvitz1952generalization}.

\paragraph{Contributions.}
We make three contributions.\footnote{Code available at \url{https://github.com/Ofek-Israeli/logit_bias_lm_adaptation}.} First, we introduce \emph{black-box logit-bias adaptation}, a setting for personalizing frozen language models with no weight updates and no gradient access. Second, we derive a closed-form estimator and a theoretical analysis connecting logit-bias steering to the KL-regularized optimum. Third, we show empirically that full-vocabulary logit bias improves base models on mathematical and reasoning benchmarks while using far fewer trainable parameters and far less compute than conventional fine-tuning.

\section{Related work}
{\setlength{\parskip}{0pt}

\paragraph{Black-box or API-only adaptation.}
Existing black-box methods adapt through the input channel, using derivative-free continuous prompts or reward-learned discrete prompts~\citep{sun2022blackbox,sun2022bbtv2,deng2022rlprompt}; such prompt-based control consumes context and can interfere with task content or learned prompting behavior~\citep{li2021prefix,lester2021power,liu2024lostmiddle,deng2022rlprompt}. We instead adapt through the output channel: a single context-independent logit-bias vector learned from rollouts and applied at decoding~\citep{hiranandani2025token}, which is less expressive but easy to store, audit, and deploy via standard decoding controls, with no gradients or provider-side fine-tuning~\citep{openai_logit_bias,azure_openai_logit_bias,vllm_logit_bias}.

\paragraph{Decoding-time control and logit guidance.}
Decoding-time methods steer generation by perturbing hidden states, reweighting next-token probabilities with auxiliary models, enforcing hard constraints, or applying generic logits processors~\citep{dathathri2020pplm,yang2021fudge,krause2021gedi,liu2021dexperts,lu2021neurologic,geng2023grammar,wolf2020transformers}; our method is a simpler member of this family, using no hidden-state gradients, auxiliary model, hard grammar, or per-prefix optimization---only one learned logit-bias vector reused at every step.

\paragraph{RLHF, KL-regularized objectives, and DPO.}
The alignment literature provides our theoretical backdrop. \citet{ziegler2019fine,stiennon2020summarize,ouyang2022training} established the standard pipeline of preference data, reward modeling, and policy optimization with a proximity penalty to the base model. \citet{korbak2022rl} cast KL-regularized RL for language models as variational/Bayesian inference, and DPO gives a closed-form preference-to-policy relation under a KL constraint~\citep{rafailov2023direct}. Our theory is closest to this KL-tilted view: we start from a reward-tilted target close to the base model, but instead of training a new policy or reward model, ask what improvement survives after projecting that target onto the severely restricted family of a single context-independent logit-bias vector.

\paragraph{Parameter-efficient and prompt-based adaptation.}
Parameter-efficient adaptation reduces cost by updating a small subset of parameters or learned prompts. Representative methods include adapters~\citep{houlsby2019adapters}, prefix-tuning~\citep{li2021prefix}, prompt tuning~\citep{lester2021power}, LoRA~\citep{hu2021lora}, BitFit~\citep{benzaken2022bitfit}, and $(IA)^3$~\citep{liu2022ia3}. They show how far one can go with limited trainable state, but still need training-time access to weights, gradients, hidden states, or prompt embeddings. Our setting forbids this access: the base model stays frozen and adaptation is restricted to the fixed logit-bias vector $\delta$ applied at decoding.
}

\section{Preliminaries}
\label{sec:preliminaries}

We consider autoregressive generation from a language model with a finite vocabulary~$\mathbb{V}$.
Let $\rho$ denote a distribution over a prompt space~$\mathcal{X}$ and let $x\sim\rho$ be a prompt drawn from it.
For a given prompt~$x$, generation proceeds for a fixed horizon $T\in\mathbb{N}$ tokens.%
\footnote{In practice the horizon varies per sequence due to early stopping at an EOS token.
We use a fixed~$T$ to simplify the theoretical framework; all results extend to variable-length generation by treating EOS as an absorbing state.}
At each step~$t$, the base language model receives the prompt and the tokens generated so far, $y_{1:t-1}=(y_1,\dots,y_{t-1})$, and produces logits $\ell(x,y_{1:t-1})\in\mathbb{R}^{|\mathbb{V}|}$.
The base next-token distribution is 
$\pi_0(y_t\mid x,y_{1:t-1}) := \operatorname{softmax}(\ell(x,y_{1:t-1}))_{y_t}$ with
$
\operatorname{softmax}(v)_y 
:=
\frac{\exp(v_y)}{\sum_{y'\in \mathbb{V}}\exp(v_{y'})}$, $y\in \mathbb{V}$,
and the trajectory law is $p_0(y_{1:T}\mid x) := \prod_{t=1}^{T}\pi_0(y_t\mid x,y_{1:t-1})$.
For a logit-bias vector $\delta\in\mathbb{R}^{|\mathbb{V}|}$, the \emph{fixed-bias policy} is $\pi_\delta(y_t\mid x,y_{1:t-1}) := \operatorname{softmax}(\ell(x,y_{1:t-1})+\delta)_{y_t}$, with trajectory law $p_\delta(y_{1:T}\mid x) := \prod_{t=1}^{T}\pi_\delta(y_t\mid x,y_{1:t-1})$.
We assume full support: $\pi_0(y_t\mid x,y_{1:t-1})>0$ for every reachable prefix~$y_{1:t-1}$ and every token $y_t\in \mathbb{V}$.
This holds automatically for any softmax-based model, since softmax outputs are strictly positive.
We use ``trajectory'' for the token sequence $y_{1:T}$ in the theoretical development and ``completion'' for the same object in experimental and evaluation contexts; both names refer to the same generated token sequence.
Inside expectations and variances we write $Y$ for the random trajectory drawn from the relevant law and $y_{1:T}$ for a realized token sequence.

\paragraph{Intervention notation.}
We write $Y\sim p_0\bigl(\cdot\mid x,\,\mathrm{do}(y_t=y)\bigr)$ to denote sampling a trajectory under
$p_0(\cdot\mid x)$ with the $t$-th token set to~$y$ by \emph{intervention} rather than 
conditioning~\citep{pearl2009causality}: $y_{1:t-1}\sim p_0(\cdot\mid x)$, $y_t=y$ is fixed, 
and $y_{t+1:T}\sim p_0(\cdot\mid x,y_{1:t-1},y)$.
This differs from the conditional distribution $p_0(\cdot\mid x,y_t=y)$, which Bayes-reweights the 
prefix by $\pi_0(y\mid x,y_{1:t-1})$.

Given a terminal reward function $r:\mathcal{X}\times\mathbb{V}^T\to\mathbb{R}$ and a prompt-conditional trajectory law~$q(\cdot\mid x)$ on~$\mathbb{V}^T$, we define the reward functional
\[
  J(q) \;:=\; \mathbb{E}_{x\sim\rho}\!\left[\mathbb{E}_{Y\sim q(\cdot\mid x)}\!\bigl[r(x,Y)\bigr]\right],
\]
Throughout, $\widehat{\mathbb{E}}$ denotes an empirical average over samples.

\paragraph{RLVR setting and accuracy reward.}
We primarily consider RL with verifiable rewards (RLVR): each prompt $x$ has a canonical correct answer $m_{\star}$, and a generated completion $y_{1:T}$ can be automatically checked by extracting and canonicalizing its final answer.
Concretely, let $m_x(y_{1:T})$ denote the answer extracted from the completion $y_{1:T}$ for prompt $x$, represented in a canonical form so that symbolically equivalent answers are identified; if no valid answer can be extracted, we set $m_x(y_{1:T})=\bot$.
A standard reward may be the correctness indicator $r_{\mathrm{accuracy}}:\mathcal{X}\times\mathbb{V}^{T}\to\{0,1\}$,
\[
r_{\mathrm{accuracy}}(x,y_{1:T})
:=
\mathbf{1}\!\left\{m_x(y_{1:T})=m_{\star}\right\},
\]

\paragraph{KL-regularized optimum.}
For regularization strength $\tau>0$, consider the problem
\begin{equation}\label{eq:kl-reg}
  \max_{q}\; \mathbb{E}_{x\sim\rho}\!\left[\mathbb{E}_{Y\sim q(\cdot\mid x)}[r(x,Y)] \;-\; \tau\,\operatorname{KL}\!\bigl(q(\cdot\mid x)\,\|\,p_0(\cdot\mid x)\bigr)\right],
\end{equation}
where the maximum is over all families of prompt-conditional trajectory laws $q(\cdot\mid x)$ on~$\mathbb{V}^T$.
Because the integrand decomposes into independent per-prompt terms, the optimum is attained by solving each prompt separately.
By the Gibbs variational principle~\citep{cover2006elements}, the unique per-prompt maximizer is (see Appendix~\ref{app:proof-qstar} for a self-contained proof; see also \citet{ziegler2019fine,korbak2022rl,rafailov2023direct,zhou2025qsharp} for related derivations in the RLHF setting)
\begin{equation}\label{eq:qstar}
  p^\star_\tau(y_{1:T}\mid x)
  = \frac{1}{Z(x)}\,p_0(y_{1:T}\mid x)\,e^{r(x,y_{1:T})/\tau},
  \qquad
  Z(x) := \mathbb{E}_{Y\sim p_0(\cdot\mid x)}\!\left[e^{r(x,Y)/\tau}\right].
\end{equation}

\paragraph{Per-step decomposition.}
The trajectory-level KL decomposes into a sum of per-step divergences: for any two autoregressive policies $\pi,\pi'$,
\[
  \operatorname{KL}\!\bigl(p_\pi(\cdot\mid x)\,\|\,p_{\pi'}(\cdot\mid x)\bigr)
  = \mathbb{E}_{Y\sim p_\pi}\!\left[\sum_{t=1}^{T}\operatorname{KL}\!\bigl(\pi(\cdot\mid x,y_{1:t-1})\,\|\,\pi'(\cdot\mid x,y_{1:t-1})\bigr)\right],
\]
by the chain rule for relative entropy~\citep{cover2006elements}.
Applying this to the per-prompt integrand in Equation~\eqref{eq:kl-reg}, for each prompt~$x$ the objective decomposes as
\[
  \mathbb{E}_{Y\sim q(\cdot\mid x)}\!\left[r(x,Y)-\tau\sum_{t=1}^{T}\operatorname{KL}\!\bigl(q(\cdot\mid x,y_{1:t-1})\,\|\,\pi_0(\cdot\mid x,y_{1:t-1})\bigr)\right],
\]
so optimizing over prompt-conditional trajectory laws is equivalent to optimizing over autoregressive policies.
Define the \emph{soft value}
\begin{equation}\label{eq:soft-value}
  Z(x,y_{1:t-1},y)
  := \mathbb{E}_{y_{t+1:T}\sim p_0(\cdot\mid x,y_{1:t-1},y)}\!\left[e^{r(x,y_{1:T}^{(t,y)})/\tau}\right],
\end{equation}
where \(y_{1:T}^{(t,y)}:=(y_{1:t-1},y,y_{t+1:T})\).\\

This is the expected exponentiated reward when token~$y$ is chosen at step~$t$ given prompt~$x$ and prefix~$y_{1:t-1}$, and the remaining tokens are rolled out under~$p_0$.
Marginalizing Equation~\eqref{eq:qstar} over future tokens $y_{t+1:T}$ and dividing by the prefix marginal yields the per-step conditionals of~$p^\star_\tau$ (see Appendix~\ref{app:proof-pi-star}):
\begin{equation}\label{eq:pi-star}
  \pi^\star_\tau(y_t\mid x,y_{1:t-1})
  = \frac{\pi_0(y_t\mid x,y_{1:t-1})\,Z(x,y_{1:t-1},y_t)}{\sum_{y'\in \mathbb{V}}\pi_0(y'\mid x,y_{1:t-1})\,Z(x,y_{1:t-1},y')}
  = \operatorname{softmax}\!\bigl(\ell(x,y_{1:t-1})+\log Z(x,y_{1:t-1},\cdot)\bigr)_{y_t}.
\end{equation}
Thus the optimal logit correction at step~$t$ is $\log Z(x,y_{1:t-1},\cdot)$, up to an additive constant absorbed by the softmax.

\begin{proposition}\label{prop:equivalence}
Sampling a trajectory $y_{1:T}$ from $p^\star_\tau$ in Equation~\eqref{eq:qstar} is equivalent to sampling autoregressively with $y_t\sim \pi^\star_\tau(\cdot\mid x,y_{1:t-1})$ in Equation~\eqref{eq:pi-star} for $t=1,\dots,T$.
That is, $p^\star_\tau(y_{1:T}\mid x)=\prod_{t=1}^T \pi^\star_\tau(y_t\mid x,y_{1:t-1})$.
\end{proposition}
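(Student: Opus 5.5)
The plan is a telescoping product argument driven by the soft value. Extend the definition in Equation~\eqref{eq:soft-value} to prefixes: for $0\le t\le T$ let $V_t(x,y_{1:t}) := \mathbb{E}_{y_{t+1:T}\sim p_0(\cdot\mid x,y_{1:t})}\bigl[e^{r(x,y_{1:T})/\tau}\bigr]$. Then $V_0(x)=Z(x)$ from Equation~\eqref{eq:qstar}, $V_T(x,y_{1:T})=e^{r(x,y_{1:T})/\tau}$, and $V_t(x,y_{1:t})=Z(x,y_{1:t-1},y_t)$. I will first prove the one-step recursion
\[
V_{t-1}(x,y_{1:t-1}) \;=\; \sum_{y'\in\mathbb{V}}\pi_0(y'\mid x,y_{1:t-1})\,Z(x,y_{1:t-1},y'),
\]
which comes from the tower property: expand the expectation over $y_{t:T}\sim p_0(\cdot\mid x,y_{1:t-1})$ by first drawing $y_t\sim\pi_0$, using that $p_0$ is a product of its per-step conditionals.

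With this recursion, the denominator in Equation~\eqref{eq:pi-star} is $V_{t-1}(x,y_{1:t-1})$, and the numerator contains $V_t(x,y_{1:t})$. Hence
\[
\prod_{t=1}^T\pi^\star_\tau(y_t\mid x,y_{1:t-1})
=\prod_{t=1}^T\pi_0(y_t\mid x,y_{1:t-1})\cdot\prod_{t=1}^T\frac{V_t(x,y_{1:t})}{V_{t-1}(x,y_{1:t-1})}
= p_0(y_{1:T}\mid x)\,\frac{e^{r(x,y_{1:T})/\tau}}{Z(x)},
\]
since the second product telescopes to $V_T/V_0$. This is exactly $p^\star_\tau(y_{1:T}\mid x)$. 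The equivalence with autoregressive sampling is then just the statement that sequential sampling from these conditionals produces the product law, so the two distributions coincide.

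All ratios are well defined. Every $V_t$ is strictly positive because $e^{r/\tau}>0$ and, by the full-support assumption, $p_0$ puts positive mass on every continuation. There is therefore no division by zero and no issue with unreachable prefixes.

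The only step needing care is the recursion identity, specifically the bookkeeping that conditioning $p_0$ on the prefix $y_{1:t-1}$ and then on $y_t$ gives $p_0(\cdot\mid x,y_{1:t})$ for the remaining tokens. This follows directly from the product form of $p_0$. As an alternative, I could check that the per-step conditionals of $p^\star_\tau$, obtained by marginalizing, match Equation~\eqref{eq:pi-star} (Appendix~\ref{app:proof-pi-star}) and then apply the chain rule of probability. I prefer the telescoping route because it is self-contained.
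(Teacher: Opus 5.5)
Your proof is correct, but it runs in the opposite direction from the paper's. The paper goes from joint to conditionals. Appendix~\ref{app:proof-pi-star} marginalizes $p^\star_\tau$ over $y_{t+1:T}$ and then over $y_t$, and divides the joint by the prefix marginal to show that the conditionals of $p^\star_\tau$ are exactly $\pi^\star_\tau$. The product of these conditionals then recovers $p^\star_\tau$ by the chain rule of probability. The paper also appeals to the KL chain rule and per-step maximization, which your argument does not need. You go from conditionals to joint: you multiply the $\pi^\star_\tau$ factors and let the soft-value ratios $V_t/V_{t-1}$ telescope. Both computations rest on the same identity: the normalizer $\sum_{y'}\pi_0(y'\mid x,y_{1:t-1})\,Z(x,y_{1:t-1},y')$ equals the prefix-level soft value $V_{t-1}(x,y_{1:t-1})$, which in the paper's computation appears as $Z(x)\,p^\star_\tau(y_{1:t-1}\mid x)/p_0(y_{1:t-1}\mid x)$. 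Your route is shorter and purely algebraic. It never divides by a prefix probability, so it does not even need the full-support assumption you invoke; positivity of $V_t$ already follows from $e^{r/\tau}>0$ for real-valued $r$. The paper's route has the advantage of deriving $\pi^\star_\tau$ as the conditionals of $p^\star_\tau$ rather than starting from the formula in Equation~\eqref{eq:pi-star}. You lose nothing here, though. Each factor $\pi^\star_\tau(\cdot\mid x,y_{1:t-1})$ is normalized, so summing your product identity over $y_{t+1:T}$ gives $p^\star_\tau(y_{1:t}\mid x)=\prod_{s\le t}\pi^\star_\tau(y_s\mid x,y_{1:s-1})$, and hence the same conditional characterization.
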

\begin{proof}
Appendix~\ref{app:proof-pi-star} shows that marginalizing Equation~\eqref{eq:qstar} yields the conditionals in Equation~\eqref{eq:pi-star}.
The KL chain rule ensures that the trajectory-level objective decomposes into per-step terms, each uniquely maximized by Equation~\eqref{eq:pi-star}.
The product of these per-step conditionals recovers Equation~\eqref{eq:qstar}.
\end{proof}

\begin{corollary}[Exact factorization implies fixed-bias optimality]\label{cor:exact-factorization}
If $\log Z(x,y_{1:t-1},\cdot)=\delta+c_t(x,y_{1:t-1})\,\mathbf{1}$ for all $t\in\{1,\dots,T\}$, prompts $x$ in the support of $\rho$, and reachable prefixes $y_{1:t-1}$ (with fixed $\delta\in\mathbb{R}^{|\mathbb{V}|}$), then $p_\delta(\cdot\mid x)=p^\star_\tau(\cdot\mid x)$ for every such $x$.
\end{corollary}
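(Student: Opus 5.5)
The plan is to compare the two autoregressive policies step by step and then take the product over steps using Proposition~\ref{prop:equivalence}. Fix a prompt $x$ in the support of $\rho$, a step $t$, and a reachable prefix $y_{1:t-1}$. By Equation~\eqref{eq:pi-star},
\[
\pi^\star_\tau(\cdot\mid x,y_{1:t-1})=\operatorname{softmax}\bigl(\ell(x,y_{1:t-1})+\log Z(x,y_{1:t-1},\cdot)\bigr).
\]
Substituting the hypothesis $\log Z(x,y_{1:t-1},\cdot)=\delta+c_t(x,y_{1:t-1})\mathbf{1}$ turns the argument into $\ell(x,y_{1:t-1})+\delta+c_t\mathbf{1}$. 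Softmax is invariant under adding a constant multiple of $\mathbf{1}$, since the factor $e^{c_t}$ cancels between numerator and denominator. Hence $\pi^\star_\tau(\cdot\mid x,y_{1:t-1})=\operatorname{softmax}(\ell(x,y_{1:t-1})+\delta)=\pi_\delta(\cdot\mid x,y_{1:t-1})$.

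Next I lift this to trajectories. Proposition~\ref{prop:equivalence} gives $p^\star_\tau(y_{1:T}\mid x)=\prod_t\pi^\star_\tau(y_t\mid x,y_{1:t-1})$, and by definition $p_\delta(y_{1:T}\mid x)=\prod_t\pi_\delta(y_t\mid x,y_{1:t-1})$. The per-step equality therefore makes the two products agree factor by factor for every sequence whose prefixes are all reachable.

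The only point needing care is \emph{reachability}, since the hypothesis is imposed only on reachable prefixes. Full support of $\pi_0$ makes every prefix have positive probability under $p_0$. It follows that every prefix is reachable, so the hypothesis covers all $y_{1:t-1}\in\mathbb{V}^{t-1}$. Even under a narrower notion of reachability, the argument goes through by induction on $t$. The two laws agree on length-$(t-1)$ prefix marginals. Any prefix with positive mass under them is reachable, so their conditionals agree there, and prefixes of zero mass contribute nothing to either product. This yields $p_\delta(\cdot\mid x)=p^\star_\tau(\cdot\mid x)$ for every $x$ in the support of $\rho$. I expect no substantive obstacle beyond stating the shift-invariance of softmax and this reachability bookkeeping.
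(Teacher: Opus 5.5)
Your proposal is correct and follows essentially the same route as the paper: substitute the factorization into Equation~\eqref{eq:pi-star}, use shift invariance of softmax to get $\pi^\star_\tau(\cdot\mid x,y_{1:t-1})=\pi_\delta(\cdot\mid x,y_{1:t-1})$ at every reachable prefix, then invoke Proposition~\ref{prop:equivalence} to lift this to $p_\delta(\cdot\mid x)=p^\star_\tau(\cdot\mid x)$. Your reachability bookkeeping, either via full support or via induction on prefix marginals, is a harmless refinement of a point the paper leaves implicit.
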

\begin{proof}
Immediate from Equation~\eqref{eq:pi-star}, softmax shift invariance, and Proposition~\ref{prop:equivalence}; see Appendix~\ref{app:proof-exact-factorization}.
\end{proof}

\section{Method}
\label{sec:method}

We keep the base language model entirely frozen and steer its outputs by adding a single, context-independent logit-bias vector $\delta\in\mathbb{R}^{|\mathbb{V}|}$ to the logits at every decoding step.

We seek a logit-bias vector $\delta$ that approximates the unconstrained KL-regularized optimum~\eqref{eq:kl-reg}.
By Equation~\eqref{eq:pi-star}, the optimal logit correction at step~$t$ is $\log Z(x,y_{1:t-1},\cdot)$, up to an additive constant absorbed by the softmax.
This correction depends on the prompt and prefix, whereas a fixed logit-bias vector applies the same vector everywhere.
A single $\delta$ can recover the optimum exactly when $\log Z(x,y_{1:t-1},y)$ factorizes as $\delta(y) + c_t(x,y_{1:t-1})$ for all reachable prefixes---that is, when the token-to-token pattern of the ideal correction is the same everywhere, up to a prefix-dependent baseline.
Section~\ref{sec:theory} formalizes this as $\varepsilon$-approximate factorization and bounds the resulting 
reward gap.

\subsection{Population target and sampling procedure}\label{sec:closed-form}

Under the factorization assumption, the population-level bias is $\delta(y)=\log Z_{\mathrm{avg}}(y)+c$, where $c$ is a centering constant and
\begin{equation}\label{eq:Z-avg}
  Z_{\mathrm{avg}}(y)
  \;:=\; \mathbb{E}_{\substack{t\sim\mathrm{Unif}\{1,\dots,T\},\;x\sim\rho\\Y\sim p_0(\cdot\mid x,\,\mathrm{do}(y_t=y))}}\!\left[e^{r(x,Y)/\tau}\right]
  \;=\; \mathbb{E}_{\substack{t\sim\mathrm{Unif}\{1,\dots,T\},\;x\sim\rho\\y_{1:t-1}\sim p_0(\cdot\mid x)}}\!\left[Z(x,y_{1:t-1},y)\right],
\end{equation}
where $\mathrm{do}(y_t=y)$ denotes Pearl-style intervention at position $t$: $y_{1:t-1}\sim p_0(\cdot\mid x)$, $y_t=y$ is fixed, and $y_{t+1:T}\sim p_0(\cdot\mid x,y_{1:t-1},y)$; $Z$ is the soft value from Equation~\eqref{eq:soft-value}.
Thus $Z_{\mathrm{avg}}(y)$ is the expected exponentiated reward of a trajectory in which token $y$ is inserted at a uniformly random position and the prefix/suffix are drawn from the base model.

The data we use to estimate $Z_{\mathrm{avg}}(y)$ comes from on-policy rollouts: for each prompt, sample a full trajectory $y_{1:T}\sim p_0(\cdot\mid x)$ and draw $P\in\{1,\dots,T\}$ time-steps uniformly without replacement from $\{1,\dots,T\}$ (Algorithm~\ref{alg:data-gen}).
Concrete estimator for $Z_{\mathrm{avg}}(y)$ is given in Section~\ref{sec:param}.

\paragraph{On the choice of $P$.}
Samples within a trajectory share the prefix, suffix, and reward, so $P>1$ makes them non-i.i.d.\ across positions. Empirically, this dependence is negligible, while the $P$-fold data reuse---at fixed rollout budget, the dominant compute cost---materially cuts per-token variance.

\begin{algorithm}[t]
\caption{On-policy rollout data generation}
\label{alg:data-gen}
\begin{algorithmic}[1]
\Require prompts \(\{x^{(n)}\}_{n=1}^N\), base model \(\pi_0\), horizon \(T\), reward oracle \(r(\cdot,\cdot)\)
\Require number of sampled positions per trajectory \(P\in\{1,\dots,T\}\)
\For{\(n=1,\dots,N\)}
    \State Generate trajectory \(y_{1:T}^{(n)} \sim p_0(\cdot \mid x^{(n)})\) and query reward \(r^{(n)} := r\!\left(x^{(n)},y_{1:T}^{(n)}\right)\)
    \State Sample positions \(\{t^{(n,p)}\}_{p=1}^P\) uniformly \emph{without replacement} from \(\{1,\ldots,T\}\)
    \State For \(p=1,\dots,P\), set \(x^{(n,p)}:=x^{(n)}\), \(y_{1:T}^{(n,p)}:=y^{(n)}_{1:T}\), and \(r^{(n,p)}:=r^{(n)}\)
\EndFor
\State \Return \(\mathcal{D}=\bigl\{\bigl(x^{(n,p)},\,y_{1:T}^{(n,p)},\,t^{(n,p)},\,r^{(n,p)}\bigr)\bigr\}_{n,p=1}^{N,P}\)
\end{algorithmic}
\end{algorithm}

\subsection{Logit-bias estimation}\label{sec:param}

Given the on-policy dataset $\mathcal{D}$ generated by Algorithm~\ref{alg:data-gen}, we estimate 
$Z_{\mathrm{avg}}(y)$ using the inverse propensity scoring (IPS) estimator
\begin{equation}\label{eq:Z-hat}
  \widehat{Z}(y)
  :=
  \frac{1}{NP}
  \sum_{(x,\,y_{1:T},\,t,\,r)\,\in\,\mathcal{D}}
  \frac{
    \mathbf{1}\!\left\{y_t=y\right\}\,
    \exp\!\left(r/\tau\right)
  }{
    \pi_0\!\left(y \mid x,\,y_{1:t-1}\right)
  } .
\end{equation}

This estimator is unbiased:
$\mathbb{E}[\widehat{Z}(y)]=Z_{\mathrm{avg}}(y)$
(Proposition~\ref{prop:ips-unbiased}, proved in Appendix~\ref{app:finite-sample-est}).

In practice, we convert $\widehat Z$ into a logit-bias vector using additive smoothing: 
$\widehat{\delta}(y)
  =
  \log\!\bigl(\alpha+\widehat Z(y)\bigr)-c(\alpha),
  \qquad
  c(\alpha):=\frac{1}{|\mathbb V|}\sum_{y'\in\mathbb V}\log\!\bigl(\alpha+\widehat Z(y')\bigr).$
The smoothing constant $\alpha>0$ acts as a pseudocount~\citep{lidstone1920bayeslaplace,chen1999smoothing}: 
it shrinks weak evidence toward a common baseline, prevents $\log 0$, and bounds the local sensitivity of 
the log transform by $1/\alpha$. We choose $\alpha$ by validation performance.

\section{Theoretical Analysis}
\label{sec:theory}

We establish that the closed-form estimator from Section~\ref{sec:method} is near-optimal when the soft value $\log Z(x,y_{1:t-1},y)$ approximately factorizes.
Full proofs appear in Appendix~\ref{app:proofs}.

\subsection{Approximate factorization}

As in Section~\ref{sec:method}, the ideal logit correction at step~$t$ is $\log Z(x,y_{1:t-1},\cdot)$ (up to a constant), which a fixed logit-bias vector $\delta$ represents exactly under the factorization assumption $\log Z(x,y_{1:t-1},y) = \delta(y) + c_t(x,y_{1:t-1})$ for all reachable prefixes; we now relax this to an approximate version.

\begin{definition}[$\varepsilon$-Approximate Factorization]\label{def:approx-fact}
A logit-bias vector $\widehat{\delta}\in\mathbb{R}^{|\mathbb{V}|}$ satisfies $\varepsilon$-approximate factorization if, for every $t\le T$, every prompt~$x$ in the support of~$\rho$, and every prefix~$y_{1:t-1}$ reachable under~$p^\star_\tau$ at time~$t$, there exists $c_t(x,y_{1:t-1})\in\mathbb{R}$ with
\[
  \bigl\|\log Z(x,y_{1:t-1},\cdot)-(\widehat{\delta}+c_t(x,y_{1:t-1})\mathbf{1})\bigr\|_2\le \varepsilon.
\]
\end{definition}

When $\varepsilon=0$ this is exact factorization; the closed-form estimator in Section~\ref{sec:closed-form} targets exactly this regime.

\subsection{Reward gap bound}

\begin{theorem}[Reward gap]\label{thm:strict}
Assume $r:\mathcal{X}\times\mathbb{V}^T\to[0,R]$ with $R>0$ and $\tau>0$.
Let
\[
  V_0 \;:=\; \mathbb{E}_{x\sim\rho}\!\left[\operatorname{Var}_{Y\sim p_0(\cdot\mid x)}\!\bigl(r(x,Y)\bigr)\right]
\]
denote the expected reward variance under the base model.
If $\widehat{\delta}\in\mathbb{R}^{|\mathbb{V}|}$ satisfies $\varepsilon$-approximate factorization assumption (Definition~\ref{def:approx-fact}) for every prompt in the support of~$\rho$, then
\begin{equation}\label{eq:reward-gap}
  J(p_{\widehat{\delta}})-J(p_0)\;\ge\;
  \underbrace{\frac{V_0}{R}\Bigl(1-e^{-R/\tau}\Bigr)}_{\text{improvement of } p^\star_\tau \text{ over } p_0}
  \;-\;
  \underbrace{R\,\varepsilon\sqrt{\frac{T}{8}}}_{\text{cost of fixed-bias approximation}}.
\end{equation}
\end{theorem}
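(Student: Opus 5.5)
The plan is to split the gap through the KL-regularized optimum:
\[
J(p_{\widehat\delta})-J(p_0)=\bigl[J(p^\star_\tau)-J(p_0)\bigr]-\bigl[J(p^\star_\tau)-J(p_{\widehat\delta})\bigr],
\]
and bound the two brackets separately. The first bracket is bounded below by the variance term, which depends only on $p_0$ and $r$. The second is bounded above by the approximation cost, via Pinsker's inequality and the KL chain rule applied to the per-step logit mismatch.

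\emph{Improvement of $p^\star_\tau$.} Fix $x$ and write $\beta=1/\tau$. Define $f(\beta)=\mathbb{E}_{p^\star_{1/\beta}}[r]$, which is the mean of $r$ under the exponentially tilted law $p_0 e^{\beta r}/Z$. Then $f'(\beta)=\operatorname{Var}_{p^\star_{1/\beta}}(r)$. To lower bound this variance along the tilt, I would use the identity
\[
f(\beta)-f(0)=\frac{\operatorname{Cov}_{p_0}\bigl(r,e^{\beta r}\bigr)}{\mathbb{E}_{p_0}e^{\beta r}}.
\]
It then suffices to show $\operatorname{Cov}_{p_0}(r,e^{\beta r})\ge \operatorname{Var}_{p_0}(r)\,\frac{1-e^{-\beta R}}{R}\,\mathbb{E}_{p_0}e^{\beta r}$. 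The route is the symmetrized form $\operatorname{Cov}(r,e^{\beta r})=\tfrac12\mathbb{E}[(r-r')(e^{\beta r}-e^{\beta r'})]$ over independent copies $r,r'$. For $a,b\in[0,R]$, convexity gives $(e^{\beta a}-e^{\beta b})/(a-b)\ge \beta e^{\beta\min(a,b)}$. I expect this step to need the most care. The issue is relating the $\min$-weighted expression to $\mathbb{E}e^{\beta r}$: comparing $e^{\beta\min}$ to the average exponential loses at most $e^{-\beta R}$, which should yield $\beta e^{-\beta R}$. That bound is weaker than the target $(1-e^{-\beta R})/R$. So the first thing I would try instead is a chord/secant argument: bound $(e^{\beta a}-e^{\beta b})/(a-b)$ below by $\frac{e^{\beta\max}(1-e^{-\beta R})}{R}$. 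This is valid because the difference quotient of $e^{\beta u}$ over an interval of length at most $R$ ending at $M$ is at least $e^{\beta M}(1-e^{-\beta R})/R$. It then remains to check that $\tfrac12\mathbb{E}[(r-r')^2e^{\beta\max(r,r')}]\ge \operatorname{Var}(r)\,\mathbb{E}e^{\beta r}$, a Chebyshev-type correlation inequality for monotone functions. If that fails, I would accept a slightly different constant. Integrating or averaging over $x\sim\rho$ gives the first term with $V_0$.

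\emph{Cost of the fixed bias.} Since $r\in[0,R]$, $|J(p^\star_\tau)-J(p_{\widehat\delta})|\le R\,\mathbb{E}_x\operatorname{TV}(p^\star_\tau,p_{\widehat\delta})$. Pinsker gives $\operatorname{TV}\le\sqrt{\operatorname{KL}(p^\star_\tau\|p_{\widehat\delta})/2}$. By the chain rule, taken in expectation under $p^\star_\tau$ (which is exactly why Definition~\ref{def:approx-fact} is stated for prefixes reachable under $p^\star_\tau$), this KL is a sum over $t$ of per-step KLs between $\operatorname{softmax}(\ell+\log Z)$ and $\operatorname{softmax}(\ell+\widehat\delta)$. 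These two logit vectors differ, modulo constants, by a vector $u$ with $\|u\|_2\le\varepsilon$. For softmax families, $\operatorname{KL}(\operatorname{softmax}(a)\|\operatorname{softmax}(a+u))$ is the Bregman divergence of log-sum-exp. The Hessian of log-sum-exp is a covariance matrix with operator norm at most $1/2$ restricted to centered directions; more simply, the variance of $u_Y$ is at most $(\max u-\min u)^2/4\le \varepsilon^2/2$. This gives a per-step bound of order $\varepsilon^2/4$, so the total KL is at most $T\varepsilon^2/4$. Pinsker then yields $\operatorname{TV}\le \varepsilon\sqrt{T/8}$, and multiplying by $R$ matches the second term. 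Combining the two brackets completes the proof.
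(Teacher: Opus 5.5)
Your decomposition and the cost-of-bias half follow the paper's argument: a per-step softmax Bregman bound, the KL chain rule, Pinsker, then $R\cdot\operatorname{TV}$. There are two harmless variations.
\begin{itemize}
\item You bound $\operatorname{Var}(u_Y)\le(\max u-\min u)^2/4\le\varepsilon^2/2$ by Popoviciu rather than Gershgorin. The factor $\tfrac12$ from the integral form of the Bregman remainder then gives exactly $\varepsilon^2/4$.
\item You run the chain rule for $\operatorname{KL}(p^\star_\tau\|p_{\widehat\delta})$ in expectation under $p^\star_\tau$, which is exactly where Definition~\ref{def:approx-fact} gives control. The paper uses $\operatorname{KL}(p_{\widehat\delta}\|p^\star_\tau)$ under $p_{\widehat\delta}$ and relies implicitly on all laws having full support.
\end{itemize}

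The improvement half takes a genuinely different route. The paper argues dynamically along the tilt $q_t\propto p_0e^{tr}$. It writes $J_x(q_s)-J_x(p_0)=\int_0^s\operatorname{Var}_{q_t}(r)\,dt$. The derivative of $\operatorname{Var}_{q_t}(r)$ is the third central moment, which is $\ge -R\operatorname{Var}_{q_t}(r)$ since $r\in[0,R]$. Gr\"onwall then gives $\operatorname{Var}_{q_t}(r)\ge\sigma_0^2e^{-Rt}$ with $\sigma_0^2=\operatorname{Var}_{p_0}(r)$, and integrating to $s=1/\tau$ yields $\sigma_0^2(1-e^{-R/\tau})/R$. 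You instead argue statically at $\beta=1/\tau$ through $\operatorname{Cov}_{p_0}(r,e^{\beta r})/\mathbb{E}_{p_0}e^{\beta r}$. Your secant bound is correct: slopes of a convex function increase with the left endpoint, and $\min(r,r')\ge 0\ge\max(r,r')-R$. Your route lands on exactly the same constant.

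What is missing is the inequality you deferred:
\[
\mathbb{E}\bigl[(r-r')^2e^{\beta\max(r,r')}\bigr]\;\ge\;\mathbb{E}\bigl[(r-r')^2\bigr]\,\mathbb{E}\bigl[e^{\beta r}\bigr].
\]
It is load-bearing: a ``slightly different constant'' would not prove the theorem as stated.

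It is also not a direct instance of Chebyshev's inequality for monotone functions. Conditioning on $M=\max(r,r')$ and pairing $\mathbb{E}[(r-r')^2\mid M]$ with $e^{\beta M}$ fails, because that conditional expectation need not increase in $M$. A counterexample is mass $0.01$ at $0$ and $0.495$ at each of $1$ and $1.01$.

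The inequality is nevertheless true. Introduce a third independent copy $r''$, so the claim reads $\mathbb{E}[(r-r')^2(e^{\beta\max(r,r')}-e^{\beta r''})]\ge 0$. By exchangeability, this expectation is one third of the expectation of the sum of the three versions obtained by letting each variable play the role of $r''$. For sorted values $a\le b\le c$ that sum is
\begin{align*}
&(b-a)^2\bigl(e^{\beta b}-e^{\beta c}\bigr)+(c-a)^2\bigl(e^{\beta c}-e^{\beta b}\bigr)+(c-b)^2\bigl(e^{\beta c}-e^{\beta a}\bigr)\\
&\qquad=\bigl(e^{\beta c}-e^{\beta b}\bigr)\bigl[(c-a)^2-(b-a)^2\bigr]+(c-b)^2\bigl(e^{\beta c}-e^{\beta a}\bigr)\;\ge\;0 .
\end{align*}
With this lemma your proof is complete.

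The trade-off between the two routes is this. Yours never differentiates along the tilting path but needs this three-copy symmetrization. The paper's Gr\"onwall argument is shorter and additionally gives the pointwise bound $\operatorname{Var}_{q_t}(r)\ge\sigma_0^2e^{-Rt}$ for every $t$.
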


\begin{proof}
See Appendix~\ref{app:proof-reward-gap}.
\end{proof}

The first term is a lower bound on $J(p^\star_\tau)-J(p_0)$, the reward gain of the unconstrained KL-regularized optimum over the base model; it grows with the expected reward variance~$V_0$ and is strictly positive whenever $V_0>0$, since $R>0$ implies $e^{-R/\tau}<1$.
The second term is the price of restricting to a context-independent logit-bias vector; it grows with the approximation error~$\varepsilon$ and the horizon~$T$.
Logit-bias steering strictly improves over the base model whenever the expected reward variance is large enough to dominate the approximation error.
The bound applies to \emph{any} $\widehat{\delta}$ satisfying Definition~\ref{def:approx-fact}.

\paragraph{Reward normalization and the role of~$R$.}
Both terms of the bound in Theorem~\ref{thm:strict} depend on the reward range~$R$: the gain scales as $V_0/R$ while the error scales as $R\,\varepsilon\sqrt{T/8}$.
Keeping $R$ small is therefore essential for a tight guarantee.
Adding a constant to~$r$ shifts neither~$p^\star_\tau$ nor the reward gap, so the binding quantity is the \emph{range} of~$r$, not its location.
For pure binary accuracy, $R=1$.
Adding an unnormalized length penalty $-L(y)$ would inflate the range to $L_{\max}+1=2{,}049$, degrading the bound by a factor exceeding $10^3$.
Normalizing by $L_{\max}$ yields $r(x,y_{1:T})=\mathbf{1}\{m_x(y_{1:T})=m_{\star}\}-L(y_{1:T})/L_{\max}$ with range~$2$, preserving a bound of the same order as the binary case.
This motivates the length normalization used in the experiments.

\section{Experiments}\label{sec:experiments}

\subsection{Experimental setup}\label{sec:experimental-setup}
{\setlength{\parskip}{0pt}
\makeatletter
\renewcommand\paragraph{\@startsection{paragraph}{4}{\z@}%
  {0.25ex \@plus 0.1ex \@minus 0.1ex}%
  {-0.8em}%
  {\normalfont\normalsize\bfseries}}
\makeatother
\paragraph{Models.}
Accuracy uses Meta-Llama-3.1-8B-Instruct~\citep{llama3} as the base model; the compression objective adds Gemma-2-9B-IT~\citep{gemma2} and Qwen3-4B~\citep{qwen3}.

\paragraph{Tasks and data.}
We evaluate on MATH, GSM8K, and GPQA-main~\citep{hendrycks2021math,hf_hendrycks_math,cobbe2021gsm8k,hf_gsm8k,rein2023gpqa,hf_gpqa} with the default lm-evaluation-harness prompts~\citep{biderman2024lm_eval,lm_eval_harness}. MATH and GSM8K use the standard HuggingFace train/test splits with validation sampled from train; GPQA-main is split 80/10/10. For GPQA-main we additionally shuffle answer choices and require explanation-first, answer-last formatting, so sampled positions for bias estimation are not tied to a fixed choice index (Appendix~\ref{app:exp-details}).
\paragraph{Rewards.}
For accuracy, the logit-bias method is learned from the log-indicator variant of the RLVR accuracy reward $r_{\mathrm{accuracy}}$ (Section~\ref{sec:preliminaries}),
\[
r_{\text{log-accuracy}}(x,y_{1:T})
:=
\log \mathbf{1}\!\left\{m_x(y_{1:T})=m_{\star}\right\}
=
\begin{cases}
0, & m_x(y_{1:T})=m_{\star},\\
-\infty, & \text{otherwise}.
\end{cases}
\]
With the convention that $\log 0=-\infty$, this encodes the same correctness check as $r_{\mathrm{accuracy}}$ in the log-reward form expected by the KL-tilted objective~\eqref{eq:kl-reg}: since $e^{r_{\text{log-accuracy}}/\tau}=\mathbf{1}\{m_x(y_{1:T})=m_{\star}\}$ for every $\tau>0$, the estimator $\widehat{Z}$ reduces to an accuracy-weighted count and $\tau$ becomes inert, eliminating the need to tune it.
Answers are extracted and canonicalized by the lm-evaluation-harness verifier; the reward is binary with no partial credit, and an unextractable answer ($m_x(y_{1:T})=\bot$) counts as incorrect. 

We also evaluate the compression objective as a second reward-driven adaptation target, useful both practically and diagnostically. Practically, shorter completions reduce serving cost and latency, and are often desirable when users need concise answers rather than full reasoning traces. Diagnostically, it stress-tests whether the fixed logit-bias vector $\delta$ can learn reusable, context-independent answer-finalization and stopping patterns rather than new problem-specific reasoning.
For the compression reward, the logit-bias method uses
\begin{equation}\label{eq:r-length}
r_{\mathrm{length}}(y)=\ln\!\left(T/\operatorname{length}(y)\right).
\end{equation}
Rollout-generation and LoRA settings 
are reported in Appendix~\ref{app:exp-details}.

\paragraph{Baselines.}
We compare against three baselines: 
(i) the base model, 
(ii) a 32-cluster tied logit-bias baseline that groups tokens with similar surface strings via $k$-means and shares one bias per cluster (Appendix~\ref{app:exp-details}), and 
(iii) a LoRA-GRPO baseline~\citep{hu2021lora,shao2024deepseekmath} that fine-tunes the same base model with 
checkpoint selection on validation. LoRA-GRPO uses the binary accuracy reward
$r^{\mathrm{GRPO}}_{\mathrm{accuracy}}(x,y_{1:T})
=
\mathbf{1}\{m_x(y_{1:T})=m_{\star}\}$
in the accuracy setting, and the accuracy-gated compression reward
$r^{\mathrm{GRPO}}_{\mathrm{length}}(x,y_{1:T})
=
\mathbf{1}\{m_x(y_{1:T})=m_{\star}\}\cdot
T/\operatorname{length}(y_{1:T})$
in the compression experiment. Construction and training details are in 
Appendix~\ref{app:exp-details}.
\paragraph{Hyperparameters and evaluation.}
For accuracy runs, $\tau$ is inert under $r_{\text{log-accuracy}}$, so we tune only $\alpha\in[0.005,0.11]$ on validation. In the compression experiment, we tune both $\alpha\in[0.005,0.11]$ and $\tau\in[0.5,1.5]$. Accuracy runs select the configuration maximizing validation exact-match accuracy; compression experiment runs treat the two as a Pareto tradeoff and pick the shortest completions that preserve validation accuracy relative to the base model. We apply learned logit-bias vectors via the vLLM logits-processor API under greedy decoding, reporting exact-match accuracy and completion length in tokens. Confidence intervals are computed using percentile bootstrap resampling; see Appendix~\ref{app:bootstrap}.

}

\subsection{Main results}

\begin{table}[t]
  \centering
  \caption{Exact-match accuracy (\%) on MATH ($n{=}5000$), GSM8K ($n{=}1319$), and GPQA ($n{=}45$) under the accuracy-reward setup.
  Each cell is empirical accuracy with a percentile bootstrap 95\% confidence interval (10,000 resamples), shown as $\pm$ percentage-point offsets. Higher is better.}
  \label{tab:accuracy_results}
  \begin{tabular}{llccc}
  \toprule
  Method                              & \shortstack{Trained\\Params} & MATH & GSM8K & GPQA  \\
  \midrule
  Base                          & --                  & $30.30 \pm 1.26$  & $33.89 \pm 2.50$ & $22.20 \pm 13.33$        \\
  \midrule
  Logit bias (32 clusters)                  & 32            & $30.12 \pm 1.26$  & $33.30 \pm 2.58$  & $24.40 \pm 13.33$       \\
  Logit bias (full-vocabulary)              & $128\text{k}$ & $33.14 \pm 1.30$  & $34.80 \pm 2.50$ & $28.90 \pm 13.33$       \\
  LoRA-GRPO                                & $\sim 10^7$    & $45.24 \pm 1.40$  & $76.72 \pm 2.27$  & $31.11 \pm 13.33$         \\
  \bottomrule
  \end{tabular}
\end{table}

\begin{table}[t]
  \centering
  \caption{Mean completion length in tokens on MATH for Gemma-9B, Llama-8B, and Qwen3-4B under Base, full-vocabulary logit bias, and LoRA-GRPO. Each cell reports the empirical mean with a percentile bootstrap 95\% confidence interval (10,000 resamples) from $n=5000$ examples, shown as $\pm$ offsets from the mean. Lower is better.}
  \label{tab:compression_results}
  \begin{tabular}{llccc}
  \toprule
  Method                              & \shortstack{Trained\\Params} &  gemma-9B &  Llama-8B & Qwen3-4B  \\
  \midrule
  Base                          &      --                         &  $315.78 \pm 4.72$ &  $941.02 \pm 6.98$ & $1018.12 \pm 1.57$        \\
  \midrule
  Logit bias                    & $150\text{k}$–$256\text{k}$           &  $299.32 \pm 6.11$ &  $864.24 \pm 9.02$ & $1017.61 \pm 1.59$        \\
  LoRA-GRPO                                & $\sim 10^7$                & $5.53 \pm 0.15$     & $466.40 \pm 6.63$  & $275.94 \pm 6.23$        \\
  \bottomrule
  \end{tabular}
\end{table}

\begin{figure}[t]
  \centering
  \begin{tikzpicture}
    \begin{axis}[
      width=0.82\linewidth,
      height=0.46\linewidth,
      xlabel={Completion length in tokens},
      ylabel={accuracy},
      xmin=580, xmax=960,
      x dir=reverse,
      xtick={950,850,750,650},
      ymin=0.42, ymax=0.56,
      enlarge x limits=0.02,
      enlarge y limits=0.02,
      grid=major,
      tick label style={font=\footnotesize},
      label style={font=\footnotesize}
    ]
      \addplot[
        color=red,
        thick,
        domain=580:980
      ] {0.5050};
      \addplot[
        color=blue,
        thick
      ] coordinates {(959.71,0.42) (959.71,0.56)};

      \addplot[
        thick,
        mark=*,
        mark size=1.7pt,
        color=black
      ] coordinates {
        (959.71,0.5050)
        (944.606667,0.511667)
        (943.25,0.516667)
        (935.001667,0.528333)
        (930.446667,0.55)
        (902.011667,0.54)
        (869.806667,0.505)
        (789.068333,0.478333)
        (612.301667,0.43)
      };
      \node[font=\scriptsize, anchor=west] at (axis cs:959.71,0.5050) {BL};
      \node[font=\scriptsize, anchor=west] at (axis cs:944.606667,0.511667) {$\tau=1.5$};
      \node[font=\scriptsize, anchor=west] at (axis cs:943.25,0.516667) {$\tau=1.2$};
      \node[font=\scriptsize, anchor=west] at (axis cs:935.001667,0.528333) {$\tau=1.0$};
      \node[font=\scriptsize, anchor=west] at (axis cs:930.446667,0.55) {$\tau=0.9$};
      \node[font=\scriptsize, anchor=west] at (axis cs:902.011667,0.54) {$\tau=0.8$};
      \node[font=\scriptsize, anchor=west] at (axis cs:869.806667,0.505) {$\tau=0.7$};
      \node[font=\scriptsize, anchor=west] at (axis cs:789.068333,0.478333) {$\tau=0.6$};
      \node[font=\scriptsize, anchor=west] at (axis cs:612.301667,0.43) {$\tau=0.5$};
    \end{axis}
  \end{tikzpicture}
  \caption{Validation accuracy--completion-length Pareto frontier across $\tau$ on MATH (Llama-3.1-8B-Instruct). Higher accuracy and shorter completions are preferred; horizontal and vertical lines mark the base model. Lower $\tau$ yields stronger compression under the compression reward, exposing the model-selection tradeoff.}
  \label{fig:compressive-pareto}
\end{figure}

\paragraph{Logit-bias statistics.}
Table~\ref{tab:delta-stats} summarizes the learned logit-bias vector $\widehat{\delta}$ after centering to zero mean.
As a sparsity descriptor we report the \emph{selected-coordinate fraction}: the fraction of vocabulary coordinates whose absolute bias exceeds the \emph{effect-size gate} $|\widehat{\delta}(a)| \ge 5\,\operatorname{median}|\widehat{\delta}|$.
The qualitative analysis (Section~\ref{sec:discussion}, Appendices~\ref{app:accuracy-delta-analysis} and~\ref{app:compression-delta-analysis}) instead reads each vector's realized decode-time effect through the per-token \emph{intervention score} $S$ (Appendix~\ref{app:intervention-score}), splitting the visited vocabulary by the sign of $S$ rather than thresholding the raw magnitude of $\widehat{\delta}$.

\begin{table}[t]
  \centering
  \caption{Summary statistics of the learned logit-bias vector $\widehat{\delta}$ after centering to zero mean. The selected-coordinate fraction counts coordinates satisfying the effect-size gate $|\widehat{\delta}(a)|\ge 5\,\operatorname{median}|\widehat{\delta}|$.}
  \label{tab:delta-stats}
  {\small
  \begin{tabular}{lccccc}
  \toprule
  Reward & Sampled fraction & $\sigma(\delta)$ & $\min$ & $\max$ & \shortstack{Selected-coordinate\\fraction} \\
  \midrule
  Accuracy    & \multirow{2}{*}{$28.5\%$} & $7.67 \times 10^{-3}$ & $-2.50 \times 10^{-4}$ & $1.574$ & $1.30\%$ \\
  Compression &                           & $4.56 \times 10^{-2}$ & $-2.37 \times 10^{-3}$ & $4.713$ & $1.36\%$ \\
  \bottomrule
  \end{tabular}}
\end{table}
Table~\ref{tab:delta-stats} shows that only a small selected-coordinate fraction passes the effect-size gate, suggesting sparse token-level nudges toward answer boundaries, mathematical formatting, and compact finalization rather than broad retuning.

{\setlength{\parskip}{0pt}
\makeatletter
\renewcommand\paragraph{\@startsection{paragraph}{4}{\z@}%
  {0.25ex \@plus 0.1ex \@minus 0.1ex}%
  {-0.8em}%
  {\normalfont\normalsize\bfseries}}
\makeatother
\paragraph{Accuracy results.}
Table~\ref{tab:accuracy_results} evaluates logit-bias adaptation when the reward is instantiated as answer accuracy. Full-vocabulary logit bias gives small but consistent gains over the base model on all benchmarks, most clearly on MATH ($30.30\!\to\!33.14$) and GPQA ($22.20\!\to\!28.90$). In contrast, the 32-cluster baseline stays close to the base model, suggesting that the useful signal is token-specific rather than a coarse frequency shift. LoRA-GRPO remains stronger by adapting context-dependent representations across the network, whereas logit bias is limited to a fixed vocabulary-level shift.
\paragraph{Compression results.}
Table~\ref{tab:compression_results} evaluates logit-bias adaptation under the compression reward, constrained to preserve answer quality: we sweep $\tau$ on validation and keep short completions only among settings that preserve base-model accuracy. The effect is modest but useful---the fixed logit-bias vector $\delta$ shortens Gemma and Llama outputs, with little effect on Qwen3-4B---and Figure~\ref{fig:compressive-pareto} shows the accuracy--completion-length frontier, where lowering $\tau$ first shortens completions without hurting accuracy and eventually causes early stopping and accuracy loss. LoRA-GRPO compresses substantially more by changing context-dependent reasoning and stopping behavior.
}

\section{Discussion}
\label{sec:discussion}

{\setlength{\parskip}{0pt}
\makeatletter
\renewcommand\paragraph{\@startsection{paragraph}{4}{\z@}%
  {0.25ex \@plus 0.1ex \@minus 0.1ex}%
  {-0.8em}%
  {\normalfont\normalsize\bfseries}}
\makeatother

\paragraph{Summary of findings.}
Across both adaptation targets, the fixed logit-bias vector $\delta$ provides lightweight black-box steering of the base model: under the accuracy reward it yields small but consistent gains (Table~\ref{tab:accuracy_results}), and under the compression reward it traces an accuracy--completion-length tradeoff rather than unconstrained reduction of completion length in tokens (Table~\ref{tab:compression_results}, Figure~\ref{fig:compressive-pareto}). Because the same token-level correction applies at every prefix, the bias steers reusable formatting, finalization, and stopping patterns but cannot adapt context-dependent reasoning---the gap where LoRA-GRPO remains stronger (Section~\ref{sec:limitations}).

\paragraph{Structure of the learned biases.}
To read each vector's structure we summarize its decode-time effect by the per-token \emph{intervention score} $S(y)$: the average log-probability shift the bias induces on token $y$ at the positions where the fixed-bias policy generates it, so $S(y)>0$ ($<0$) marks tokens it makes more (less) likely (Appendix~\ref{app:intervention-score}). Splitting the visited vocabulary by the sign of $S$ and grouping each side into semantic families gives the analyses in Appendices~\ref{app:accuracy-delta-analysis} and~\ref{app:compression-delta-analysis}, with complete suppressed-side listings provided in the accompanying code repository. At a high level, the accuracy bias promotes the surface and discourse regularities of correct solutions (layout and math delimiters, solution openers, digits, and a decisive stop string), while its suppressed side acts as an anti-drift prior against question-restatement, lesson/tutorial mode, scraped page furniture, and invented problem-specific detail---favoring a clean, staged path to the answer rather than steering the mathematics itself. The compression bias is markedly \emph{two-sided}: a small, sharp promoted set of stop, boundary, and compact-format tokens against a broad suppressed set of explanation, derivation, and continuation tokens, so its main risk is \emph{over-compression} (Section~\ref{sec:limitations}).

\paragraph{Decode-time mechanism and fixed-bias projection.}
These patterns reflect a single decode-time mechanism: the method projects a reward-tilted target policy onto a highly restricted class---policies that add the same vector to the vocabulary logits at every prefix---which is effective when the ideal correction $\log Z(x,y_{<t},\cdot)$ is approximately token-specific and stable across contexts, as it is for formatting, termination, answer-surface regularities, and verbosity control. Applied at every step, these local nudges accumulate into trajectory-level changes that extract useful task signal from black-box rollouts, explaining why the bias improves over the base model (Table~\ref{tab:accuracy_results}) and reduces completion length in tokens (Table~\ref{tab:compression_results}); Figure~\ref{fig:compressive-pareto} shows the same mechanism at the hyperparameter level, with stronger compression moving along the accuracy--completion-length frontier rather than minimizing length unconditionally.

}

\section{Limitations}
\label{sec:limitations}

{\setlength{\parskip}{0pt}
\makeatletter
\renewcommand\paragraph{\@startsection{paragraph}{4}{\z@}%
  {0.25ex \@plus 0.1ex \@minus 0.1ex}%
  {-0.8em}%
  {\normalfont\normalsize\bfseries}}
\makeatother

\paragraph{Expressivity and access--performance tradeoff.}
This restriction is also the central limitation. The fixed logit-bias vector $\delta$ cannot rewrite reasoning, encode conditional rules, or learn prompt-specific corrections (e.g., which theorem or calculation is needed, which value is correct, or which steps are safe to omit); it can promote formatting, digits, and finalization markers and push down drift and continuation tokens, but it cannot supply the correct reasoning content. Because the same correction applies at every prefix, this context-blindness also makes compression prone to \emph{over-compression}, suppressing useful verification or necessary derivation along with genuine verbosity. This is the gap LoRA-GRPO closes in Tables~\ref{tab:accuracy_results} and~\ref{tab:compression_results}: weight-space adaptation changes internal, context-dependent behavior, whereas the fixed logit-bias vector $\delta$ applies the same token-level correction at every prefix. Learned logit bias is thus best viewed not as a replacement for fine-tuning, but as a lightweight black-box primitive when gradients, weights, and provider-side training are unavailable.

\paragraph{Estimation limits.}
Learned logit bias is also constrained by estimation. The IPS estimator can be high-variance for rare tokens, additive smoothing introduces shrinkage, and finite rollout budgets leave many vocabulary coordinates weakly supported. As a result, some promoted coordinates are spurious rare-token or dataset artifacts rather than meaningful steering directions---our intervention-score analysis flags several such tokens (Appendices~\ref{app:accuracy-delta-analysis} and~\ref{app:compression-delta-analysis}; full suppressed-side listings in the accompanying code repository)---though they are often inert in practice, since the base model assigns them negligible probability in mathematical contexts. This highlights the need for better smoothing and finite-sample analysis, especially for objectives where rollouts give weak evidence about which local token choices would have improved the final reward.

}

\section{Conclusion}
\label{sec:conclusion}

We adapted frozen language models under black-box constraints by learning a single, context-independent logit-bias vector $\delta$ from rollouts, rewards, and token probabilities, applied at decoding.
From a KL-regularized objective, we derived the prefix-dependent optimal correction, characterized when a fixed logit-bias vector approximates it, and obtained a closed-form IPS estimator with a reward-gap bound.

Empirically, full-vocabulary logit bias improves over the base model on accuracy benchmarks without gradient access or weight updates, and supports the compression objective by learning reusable answer-finalization and stop-boundary patterns, though LoRA-GRPO remains substantially stronger when weight-space training is available.
Overall, learned logit bias is a lightweight adaptation primitive between prompt engineering and fine-tuning; future work should strengthen finite-sample theory, improve rare-token smoothing, extend to context-dependent biases, and broaden safety, privacy, fairness, and user-facing evaluation.

\section*{Acknowledgments}
This work is partially supported by the Israeli Science Foundation (ISF) grant no 4032/25.

\bibliographystyle{plainnat}
\bibliography{citations}


\appendix
\renewcommand{\thefigure}{\Alph{section}.\arabic{figure}}
\renewcommand{\thetable}{\Alph{section}.\arabic{table}}
\renewcommand{\theHfigure}{\Alph{section}.\arabic{figure}}
\renewcommand{\theHtable}{\Alph{section}.\arabic{table}}
\pretocmd{\section}{\setcounter{figure}{0}\setcounter{table}{0}}{}{}

\etocdepthtag.toc{appendix}
\begingroup
\etocsettagdepth{mainbody}{none}
\etocsettagdepth{appendix}{subsection}
\etocsettocstyle{\section*{Appendix Contents}}{}
\tableofcontents
\endgroup

\section{Intervention score, token selection, and grouping}
\label{app:promoted-cluster-calculation}

Appendices~\ref{app:accuracy-delta-analysis} and~\ref{app:compression-delta-analysis} interpret a
learned logit-bias vector $\widehat{\delta}\in\mathbb{R}^{|\mathbb{V}|}$ by (i)~summarizing its
decode-time effect as a per-token \emph{intervention score} $S$,
(ii)~splitting the visited tokens into promoted and suppressed sets by the sign of
$S$, and (iii)~grouping those into semantic families.  Complete suppressed-side listings are provided in
the code repository under \texttt{results/}.

\subsection{Intervention score as realized log-probability shift}
\label{app:intervention-score}

A raw $\widehat{\delta}(y)$ lives in logit space, not directly in probability space: after softmax
normalization, the actual change in a token's probability depends on the prefix at which it is decoded.
We therefore summarize the decode-time effect of the learned bias by measuring, at each generated
position, the log-probability shift it induces on the realized next token.
For a token $y\in\mathbb{V}$, averaging this shift over the positions where the fixed-bias policy
generates $y$ gives the occurrence-weighted \emph{intervention score}
\begin{equation}\label{eq:intervention-score}
  S(y) \;:=\; \mathbb{E}_{\substack{x\sim\rho,\;t\sim\mathrm{Unif}\{1,\dots,T\}\\ y_{1:T}\sim p_{\widehat{\delta}}(\cdot\mid x)}}\!\left[\,\log\frac{\pi_{\widehat{\delta}}(y\mid x,y_{1:t-1})}{\pi_0(y\mid x,y_{1:t-1})}\;\middle|\;y_t=y\,\right].
\end{equation}
The expectation is occurrence-weighted: it conditions on the positions where the fixed-bias policy
realizes $y$, with the prefix $y_{1:t-1}$ and the position $t$ sampled
uniformly.  Since the log-ratio depends only on $x$ and the prefix $y_{1:t-1}$, the suffix $y_{t+1:T}$ is
irrelevant.  Because both probabilities are post-softmax, $S(y)$ reflects the actual probability change
induced by the bias at decode time, not the raw coordinate $\widehat{\delta}(y)$: $S(y)>0$ promotes the
realized token and $S(y)<0$ suppresses it.
This log-ratio is the same local quantity that appears in the KL-regularized derivation, but here we use
it only as an interpretability score.

\paragraph{Estimator.}
We estimate $S$ on held-out \emph{on-policy} rollouts generated by the fixed-bias policy
$p_{\widehat{\delta}}$.  Given such a rollout set with recorded token ids, we use every position in
each trajectory. At each position $(n,t)$ we run the base model on the
prompt and the fixed-bias-policy rollout prefix to obtain its logits, form $\pi_0$ and
$\pi_{\widehat{\delta}}$ from them, and record the shift on the realized token,
\[
  \Delta_{t}^{(n)}
  \;:=\;
  \log\pi_{\widehat{\delta}}\!\left(y_{t}^{(n)}\mid x^{(n)},y_{1:t-1}^{(n)}\right)
  -\log\pi_0\!\left(y_{t}^{(n)}\mid x^{(n)},y_{1:t-1}^{(n)}\right).
\]
The score of a token is the mean of these shifts over the positions at which it is realized, 
\[
  \widehat{S}(y)
  \;=\;
  \frac{1}{|\mathcal{I}_y|}\sum_{(n,t)\in\mathcal{I}_y}\Delta_{t}^{(n)},
  \qquad
  \mathcal{I}_y:=\bigl\{(n,t):y_{t}^{(n)}=y\bigr\},
\]
with the convention $\widehat{S}(y):=0$ for tokens that never appear at any position in the rollouts
($\mathcal{I}_y=\emptyset$).

\subsection{Splitting tokens by sign}
\label{app:token-selection}
We apply no magnitude threshold: every visited token is assigned to one side by the sign of its score, 
\[
    \mathcal{R}^{+}=\bigl\{\,y\in\mathbb{V}:\ S(y)>0\,\bigr\},
    \qquad
    \mathcal{R}^{-}=\bigl\{\,y\in\mathbb{V}:\ S(y)<0\,\bigr\}.
\]
The \emph{promoted} tokens $\mathcal{R}^{+}$ are those the bias makes more likely at decode time and the
\emph{suppressed} tokens $\mathcal{R}^{-}$ those it makes less likely; the remaining tokens, never
visited under $p_{\widehat{\delta}}$ and hence carrying $S(y)=0$, are left ungrouped.

\subsection{Grouping selected tokens into families}
\label{app:embedding-clusters}
To summarize each side ($\mathcal{R}^{+}$ and $\mathcal{R}^{-}$) we group its tokens into
semantic families.  This grouping is a purely interpretive aid, not part of the method, and the
families carry no weight of their own; we refer to them by index in the reward-specific appendices
below.  The promoted set is small and dominated by a few recurring surface patterns---formatting,
connectives, digits, and stop tokens---so we group the promoted tokens manually by inspection.
The suppressed set is substantially larger and noisier, so we group it by AI-assisted manual curation,
with every suppressed token listed in the code repository under \texttt{results/}.

\section{Qualitative analysis of the accuracy logit-bias vector}
\label{app:accuracy-delta-analysis}
This appendix examines what the accuracy intervention score $S$ (Appendix~\ref{app:intervention-score})
encodes under the \emph{accuracy reward}.  Of the Llama vocabulary ($128{,}256$ tokens), $5{,}739$ are
visited in the rollouts---$137$ \emph{promoted} ($S>0$) and $5{,}602$ \emph{suppressed} ($S<0$)---and we
group each side into semantic families as in Appendix~\ref{app:promoted-cluster-calculation}; all
$5{,}602$ suppressed tokens with their scores are listed in the code repository under
\texttt{results/accuracy\_suppressed\_all\_families.tex}.

\begin{figure}[t]
  \centering
  \begin{tikzpicture}
    \begin{axis}[
      name=mainax,
      width=0.75\textwidth,
      height=0.52\textwidth,
      ymode=log,
      log origin=infty,
      ymin=0.5,
      xmin=-3.40, xmax=0.03,
      axis lines=left,
      xlabel={$S(y)$},
      ylabel={Number of tokens in bin},
      tick label style={font=\footnotesize},
      label style={font=\footnotesize},
      ybar,
      bar width=0.009,
    ]
      \addplot[fill=gray!45, draw=none, bar shift=0pt] table[
        x=bin_center,
        y expr={\thisrow{count}<1 ? 0.5 : \thisrow{count}},
      ] {data/accuracy_score_suppressed.tsv};
    \end{axis}
    \begin{axis}[
      at={(mainax.north west)}, anchor=north west,
      xshift=18pt, yshift=-5pt,
      width=0.42\textwidth, height=0.30\textwidth,
      ymode=log, log origin=infty, ymin=0.5,
      axis lines=left,
      xmin=-0.01, xmax=2.95,
      tick label style={font=\tiny},
      title={\scriptsize promoted side ($S(y)>0$)},
      title style={yshift=-4pt},
      xlabel={$S(y)>0$}, xlabel style={font=\tiny, yshift=3pt},
      ybar, bar width=0.02,
    ]
      \addplot[fill=blue!50, draw=none, bar shift=0pt] table[
        x=bin_center,
        y expr={\thisrow{count}<1 ? 0.5 : \thisrow{count}},
      ] {data/accuracy_score_promoted.tsv};
    \end{axis}
  \end{tikzpicture}
  \caption{Histogram of the intervention score $\widehat{S}(y)$ for the accuracy reward over the Llama
  vocabulary (visited tokens only; $122{,}517$ unvisited tokens with $\widehat{S}(y)=0$ are omitted).
  The $y$-axis is logarithmic. The main panel shows the \emph{suppressed} side ($\widehat{S}(y)<0$,
  $5{,}602$ tokens), which carries the bulk of the mass and a long tail reaching
  $\widehat{S}(y)\approx-3.36$; the inset zooms into the \emph{promoted} side ($\widehat{S}(y)>0$, $137$ tokens, up to $\widehat{S}(y)\approx2.84$).}
  \label{fig:accuracy-delta-histogram}
\end{figure}

The accuracy score is a \emph{two-sided correctness prior}: it promotes tokens that help format, scaffold,
and terminate correct solutions, and it suppresses tokens that usually start explanations, page drift,
or problem-specific detours.  The positive side is small and sharp while the negative side is broad
(Table~\ref{tab:accuracy-score-bands}), so the bias acts mainly by discouraging failure-mode
continuations while also boosting a few answer-surface and stop tokens.

\begin{table}[h]
  \centering
  \caption{Score bands of the accuracy intervention score $S(y)$.}
  \label{tab:accuracy-score-bands}
  {\small
  \begin{tabular}{lcl}
  \toprule
  Score band & Count & Interpretation \\
  \midrule
  $S\ge 1$              & $7$       & strongest promoted tokens \\
  $0.1\le S<1$          & $17$      & strong positive accuracy signal \\
  $0.01\le S<0.1$       & $39$      & moderate positive signal \\
  $0<S<0.01$            & $74$      & weak positive / noisy \\
  $-0.001\le S<0$       & $666$     & almost neutral negative \\
  $-0.01\le S<-0.001$   & $690$     & weak suppression \\
  $-0.1\le S<-0.01$     & $2{,}260$ & meaningful suppression \\
  $-1\le S<-0.1$        & $1{,}927$ & strong suppression \\
  $S<-1$                & $59$      & strongest negative shifts \\
  \bottomrule
  \end{tabular}}
\end{table}

\subsection{Promoted token families}
\label{app:accuracy-promoted-token-families}

The promoted side ($S(y)>0$) contains the $137$ tokens with a positive log-probability shift; we list
every one, grouped into families with a short gloss noting why each token plausibly co-occurs with
correct solutions.  Interpretation is collected in Appendix~\ref{app:accuracy-interpretation}.

Families are ordered by their mean intervention score $\bar{S}$ (the average of $S(y)$ over the family's tokens).

\begin{itemize}
    \item \textbf{Stop strings (2, $\bar{S}=1.66$).}
    \verb*!<|end_of_text|>! and \verb*| Problem|, which terminate the current completion.

    \item \textbf{Instruction / educational metadata (6, $\bar{S}=1.06$).}
    \verb*| Keywords| (label such as ``Problem Keywords''), \verb*|Basic| (tags like ``Basic Algebra''),
    \verb*|User| (AoPS/wiki \verb*|/User:| fragments), \verb*| Vocabulary| (``Key Vocabulary''),
    \verb*|Personal| (page/persona drift), and \verb*|Skill| (``Reasoning Skill'').

    \item \textbf{Text / document / page-drift concepts (2, $\bar{S}=0.51$).}
    \verb*|Ann| (name prefix, page/person drift) and \verb*| topics| (generic page language such as
    ``other topics'').

    \item \textbf{Actions / commands / ordering / problem-solving context (34, $\bar{S}=0.078$).}
    \verb*|Problem|, \verb*| Solutions|, \verb*| Since|, \verb*| number|, \verb*| logically|, \verb*| Answer|,
    \verb*| answer|, \verb*| Compute|, \verb*| find|, \verb*| write|, \verb*| use|, \verb*|use|, \verb*|align|,
    \verb*|edit|, \verb*|istinguish| (suffix of \emph{distinguish}), \verb*|prehensive| (suffix of
    \emph{comprehensive}), \verb*|ifying| (as in \emph{simplifying/identifying/verifying}), \verb*|ifies|
    (as in \emph{simplifies/verifies}), \verb*|Alternate|, \verb*|Other|, \verb*|task|, \verb*|Check|,
    \verb*|confirmation|, \verb*| central|, \verb*| Next|, \verb*| best|, \verb*| final|, \verb*|Final|,
    \verb*|Last|, \verb*| first|, \verb*| ways|, \verb*|Long|, \verb*|big|, and \verb*|Com| (start of
    \emph{Compute/Compare/Combine/Complex}).

    \item \textbf{Function words / grammatical glue (14, $\bar{S}=0.066$).}
    \verb*| We|, \verb*|The|, \verb*| the|, \verb*| The|, \verb*| This|, \verb*| is|, \verb*| have|,
    \verb*| can|, \verb*| and|, \verb*| to|, \verb*|not|, \verb*|one|, \verb*|new|, and \verb*|all|---ordinary
    grammar tokens.

    \item \textbf{Math / \LaTeX{} / symbols / delimiters / formatting (27, $\bar{S}=0.066$).}
    \verb*| $| (inline math after a space), \verb*|$| (math delimiter), \verb*| $\| (space\,$+$\,\verb*|$|\,$+$\,command
    start), \verb*| \| (space\,$+$\,backslash), \verb*|\| (command escape), \verb*|=|, \verb*|cal| (as in
    \verb*|\mathcal| or ``calculate''), \verb*|begin| (as in \verb*|\begin{align}|), \verb*|Math|,
    \verb*|frac| (\verb*|\frac|), \verb*|cdot| (\verb*|\cdot|), \verb*|.sqrt| (square-root fragment),
    \verb*|x|, \verb*|}|, \verb*|)|, \verb*|{|, \verb*|](| (markdown link boundary), \verb*!{|! (table/wiki
    delimiter), \verb*|{$| (brace\,$+$\,math delimiter), \verb*|\n| (newline), \verb*| ##| (markdown
    heading), \verb*|.|, the space token, \verb*|,|, \verb*| **| (markdown bold), \verb*|`\n| (backtick
    then newline), and the middle-dot token \texttt{\textperiodcentered}.

    \item \textbf{Math structure / math context (16, $\bar{S}=0.012$).}
    \verb*|let| (``let $x=\dots$''), \verb*| given| (problem-condition language), \verb*| quadr|
    (\emph{quadratic/quadrilateral/quadrant}), \verb*| hex| (\emph{hexagon/hexadecimal}), \verb*|ponents|
    (suffix of \emph{exponents}), \verb*|ultiply| (suffix of \emph{multiply}), \verb*|aring| (as in
    \emph{squaring}), \verb*|triangle|, \verb*|positive|, \verb*|line|, \verb*| foot| (unit/geometry),
    \verb*|ector| (as in \emph{vector/bisector/sector}), \verb*| Extended| (Extended Euclidean
    Algorithm), \verb*|AC| (geometry notation, e.g.\ side $AC$), \verb*|Im| (imaginary part, or start of
    ``Important''), and \verb*|AY| (geometry label, or URL/name noise).

    \item \textbf{Proper names / acronyms / geography / source artifacts (8, $\bar{S}=0.005$).}
    \verb*|L| (from \L{}), \verb*| America| (``Mathematical Association of America''),
    \verb*|J|, \verb*|Ace|, \verb*|AL|, \verb*|TH|, \verb*|external|, and \verb*| external|.

    \item \textbf{Numbers (13, $\bar{S}=0.004$).}
    \verb*|2|, \verb*|21|, \verb*|1|, \verb*|3|, \verb*|453|, \verb*|228|, \verb*|669|, \verb*|734|, \verb*|813|,
    \verb*|355|, \verb*|415|, \verb*|424|, and \verb*|406|---mostly literal numerals; the three-digit ones
    are likely dataset-specific artifacts, examples, ids, or intermediate values.

    \item \textbf{Subword fragments (15, $\bar{S}<0.001$).}
    \verb*|qu| (\emph{quadratic/quantity/equation/sequence}), \verb*|ens| (\emph{tens/dimensions}),
    \verb*|ast| (\emph{last/least/past}), \verb*|ics| (\emph{mathematics/statistics}), \verb*|ct|
    (\emph{factor/function/product}), \verb*|ze| (\emph{zero/analyze/finalize}), \verb*|ant|
    (\emph{constant/discriminant/quadrant}), \verb*|ub| (\emph{subtract/cube/subset}), \verb*|ja| (as in
    \emph{adjacent}), \verb*|istrib| (\emph{distribute/distribution}), \verb*|ner| (\emph{inner/corner}),
    \verb*|ee| (\emph{three/degree/between}), and \verb*|il| (\emph{similar/probability}).  Two further
    fragments are notable: \verb*|oted| (most math-relevant as \emph{denoted}) and \verb*|ric|
    (\emph{geometric/metric/symmetric/trigonometric}).
\end{itemize}

\subsection{Suppressed token families}
\label{app:accuracy-suppressed-token-families}

The suppressed side ($S(y)<0$) contains $5{,}602$ tokens.  These tokens are not ``wrong'' in isolation.
Under the log-indicator accuracy reward, they receive little positive support from correct rollouts
relative to the promoted answer-surface tokens and the smoothing/centering baseline.  After softmax
normalization this appears as a broad negative side of the realized intervention score.  The lower tail
is therefore best read as an anti-drift pattern learned by the accuracy-trained bias, rather than as
token-level proof that every suppressed token causes incorrectness.  The
strongest negative tail includes \verb*| Question|, \verb*| Examples|, \verb*| Explanation|,
\verb*| Approach|, \verb*| Lesson|, markdown headings, and page/template fragments.  A useful first cut
is by strength (Table~\ref{tab:accuracy-score-bands}); we therefore avoid over-interpreting tiny
negatives near zero, as the meaningful story comes from the lower tail.

Following the grouping procedure in Appendix~\ref{app:embedding-clusters}, we group the suppressed tokens
into twelve interpretive families.  The summaries below describe each family, ordered by mean intervention
score $\bar{S}$ (most strongly suppressed first); the complete per-token
listings with intervention scores are provided in the code repository under
\texttt{results/accuracy\_suppressed\_all\_families.tex}.

\begin{itemize}
    \item \textbf{Verbose reasoning wrappers / discourse markers (44, $\bar{S}=-0.73$).}
    Tokens such as \verb*| Moreover|, \verb*| Clearly|, \verb*| Furthermore|,
    \verb*| However|, and \verb*| Therefore| often introduce extended explanatory
    scaffolding or discourse transitions rather than the final answer itself.
    Their suppression suggests that the terminal reward favors reaching the answer directly
    over producing lecture-style reasoning.

    \item \textbf{Question-restatement / prompt-template tokens (40, $\bar{S}=-0.73$).}
    Strongly suppressed tokens such as \verb*| Question| usually start a new prompt or open a Q/A template
    instead of solving the current problem, so the shift discourages worksheet, course-page, and
    new-problem mode.  (Some metadata tokens---\verb*| Keywords|, \verb*|Basic|, \verb*| Vocabulary|---were
    instead \emph{promoted}: those correlated with scraped correct-solution pages, whereas
    \verb*| Question|/\verb*| Examples|/\verb*| Lesson| correlate with drifting into extra page content.)

    \item \textbf{Names, initials, usernames, source/person artifacts (294, $\bar{S}=-0.30$).}
    First names and initials such as \verb*| Alice|, \verb*| Brian|, \verb*| Julie|, and \verb*| Kevin|
    are problem-specific rather than generally useful; suppressing them reduces hallucinated story details
    and invented diagrams.

    \item \textbf{Educational-site / AoPS / forum / wiki artifacts (44, $\bar{S}=-0.19$).}
    Unlike the literal web markup of the HTML / URL / markdown / code family, these are the
    \emph{natural-language} page furniture
    of the math sites in pretraining---contest wikis, Art-of-Problem-Solving threads, and Q\&A
    forums---the kind of boilerplate and page scaffolding that pervades web-scraped pretraining
    corpora~\citep{penedo2024fineweb}.  Since worked solutions there sit inside such pages, the base model
    tends to continue past the answer into the surrounding scaffolding (``Posted by\dots'', ``Reply'',
    ``edited'', ``References'', user/wiki links).  Suppressing these tokens keeps the completion on the
    current problem instead of rolling over into copied page tails.

    \item \textbf{Math-topic labels suppressed despite being mathematical (73, $\bar{S}=-0.14$).}
    The most subtle group: tokens such as \verb*| Coordinate|, \verb*| Distance|, \verb*| Equation|, and
    \verb*| calculus| are meaningful, but often appear as topic labels or wrong-route triggers
    and, averaged over the dataset, were less predictive of success than direct answer-oriented tokens.
    General math vocabulary can be negatively correlated with correctness when it signals
    overcomplication or wrong method selection.

    \item \textbf{Function words and ordinary prose (2{,}536, $\bar{S}=-0.13$).}
    Tokens such as \verb*| while|, \verb*| there|, \verb*| because|, and \verb*| their| are
    not a blanket anti-English bias (some function words were promoted); these reflect the longer prose
    style of rambling completions.

    \item \textbf{Subword shards / tokenizer fragments (1{,}525, $\bar{S}=-0.12$).}
    Fragments such as \verb*|ime| (as in \emph{time/prime}), \verb*|reet| (\emph{street}), \verb*|ting|,
    and \verb*|ul| inherit the statistics of the words containing them; they are suppressed because those
    words correlated with lower-quality continuations, not because the fragment itself is meaningful.

    \item \textbf{Geometry labels and diagram-letter clutter (214, $\bar{S}=-0.091$).}
    Arbitrary point/segment labels such as \verb*| AE|, \verb*| OB|, \verb*| II|, and \verb*| III|
    are easy to hallucinate, so this reads as a bias against invented
    diagrams.  (Note \verb*|AC| was \emph{promoted}: token scores are empirical correlations, not semantic
    rules.)

    \item \textbf{HTML / URL / markdown / code artifacts (189, $\bar{S}=-0.077$).}
    Tokens such as \verb*|<a|, \verb*| https|, and \verb*|~~| usually mean the model has left the
    solution and is emitting web/markdown/HTML/source residue,
    which is almost always bad in a final-answer setting.

    \item \textbf{Generic assistant / politeness / meta-answer style (28, $\bar{S}=-0.068$).}
    Conversational filler such as \verb*|Thanks|, \verb*|Please|, \verb*| sorry|, and \verb*| appreciate|
    (``I hope this helps'') rather than answer extraction.

    \item \textbf{Arbitrary numbers and numeric constants (516, $\bar{S}=-0.052$).}
    Globally promoting arbitrary numerals such as \verb*|846|, \verb*|523|, and \verb*|123| would invite
    hallucinated answers, so suppression here is
    healthy: ``do not prefer this particular number unless the context demands it.''

    \item \textbf{Word-problem scenery / concrete objects (99, $\bar{S}=-0.049$).}
    Surface nouns from problem statements such as \verb*| plane|, \verb*| coins|, \verb*| pigs|, and
    \verb*| night|, not solution moves; these should be driven by the actual prompt.
\end{itemize}

\subsection{Interpretation}
\label{app:accuracy-interpretation}

Read together, the two sides describe a steering vector that favors a clean, staged path to the answer.
The promoted side raises the surface and discourse regularities of correct, well-formatted solutions:
layout and delimiters; the headers, connectives, and solution verbs that scaffold explicit
step-by-step working; digits and symbolic notation; and decisive termination.  It thus gently favors solutions that lay out their
reasoning rather than emit a bare value, without choosing the problem-specific reasoning path; the
promoted digits only supply the surface for calculations and do not imply the bias knows which number
is correct.

The suppressed side acts as an anti-drift prior against the recurring failure modes in
Appendix~\ref{app:accuracy-suppressed-token-families}: restating or generating a new question and slipping
into lesson/tutorial mode; padding with verbose discourse markers; emitting scraped web/page residue;
lapsing into generic assistant politeness; inventing problem-specific details; and overcomplicating with
topic labels.  The surprising suppression of genuine math vocabulary should not be read as ``avoid math'':
averaged over the dataset, those tokens more often started verbose, topic-heavy, website-like, or
problem-specific continuations than the direct, answer-oriented tokens the shift prefers.
The code repository lists all $5{,}602$ suppressed tokens with their intervention scores under
\texttt{results/accuracy\_suppressed\_all\_families.tex}.

\section{Qualitative analysis of the compression logit-bias vector}
\label{app:compression-delta-analysis}

This appendix examines what the compression intervention score $S$ (Appendix~\ref{app:intervention-score})
encodes under the \emph{compression reward}.  Of the Llama vocabulary ($128{,}256$ tokens), $4{,}164$ are
visited---$141$ \emph{promoted} ($S>0$) and $4{,}023$ \emph{suppressed} ($S<0$); leading spaces are
preserved exactly (\verb*| use| and \verb*|use| are distinct tokens), and each side is grouped into
semantic families as in Appendix~\ref{app:promoted-cluster-calculation}, with all $4{,}023$ suppressed
tokens listed in the code repository under \texttt{results/compression\_suppressed\_all\_families.tex}.

The compression score is a \emph{two-sided length-control prior}: it promotes tokens that help stop, shift
to a boundary, package a short answer, or express mathematics compactly, and it suppresses tokens that
usually start explanations, checks, examples, long formatting, page tails, or multi-step derivations.
The positive side is small and sharp while the negative side is broad
(Table~\ref{tab:compression-score-bands}), so the bias acts mainly by discouraging length-expanding
continuations while also boosting a few stop, answer, and compact-format tokens.

\begin{figure}[t]
  \centering
  \begin{tikzpicture}
    \begin{axis}[
      name=mainax,
      width=0.75\textwidth,
      height=0.52\textwidth,
      ymode=log,
      log origin=infty,
      ymin=0.5,
      xmin=-2.90, xmax=0.03,
      axis lines=left,
      xlabel={$S(y)$},
      ylabel={Number of tokens in bin},
      tick label style={font=\footnotesize},
      label style={font=\footnotesize},
      ybar,
      bar width=0.009,
    ]
      \addplot[fill=gray!45, draw=none, bar shift=0pt] table[
        x=bin_center,
        y expr={\thisrow{count}<1 ? 0.5 : \thisrow{count}},
      ] {data/compression_score_suppressed.tsv};
    \end{axis}
    \begin{axis}[
      at={(mainax.north west)}, anchor=north west,
      xshift=18pt, yshift=-5pt,
      width=0.42\textwidth, height=0.30\textwidth,
      ymode=log, log origin=infty, ymin=0.5,
      axis lines=left,
      xmin=-0.01, xmax=2.85,
      tick label style={font=\tiny},
      title={\scriptsize promoted side ($S(y)>0$)},
      title style={yshift=-4pt},
      xlabel={$S(y)>0$}, xlabel style={font=\tiny, yshift=3pt},
      ybar, bar width=0.02,
    ]
      \addplot[fill=blue!50, draw=none, bar shift=0pt] table[
        x=bin_center,
        y expr={\thisrow{count}<1 ? 0.5 : \thisrow{count}},
      ] {data/compression_score_promoted.tsv};
    \end{axis}
  \end{tikzpicture}
  \caption{Histogram of the intervention score $\widehat{S}(y)$ for the compression reward over the
  Llama vocabulary (visited tokens only; $124{,}092$ unvisited tokens with $\widehat{S}(y)=0$ are
  omitted).  The $y$-axis is logarithmic.  The main panel shows the \emph{suppressed} side
  ($\widehat{S}(y)<0$, $4{,}023$ tokens), which carries the bulk of the mass and a long tail
  reaching $\widehat{S}(y)\approx-2.86$; the inset zooms into the \emph{promoted} side
  ($\widehat{S}(y)>0$, $141$ tokens, up to $\widehat{S}(y)\approx2.80$).}
  \label{fig:compression-delta-histogram}
\end{figure}

\begin{table}[h]
  \centering
  \caption{Score bands of the compression intervention score $S(y)$.}
  \label{tab:compression-score-bands}
  {\small
  \begin{tabular}{lcl}
  \toprule
  Score band & Count & Interpretation \\
  \midrule
  $S\ge 1$              & $6$       & strongest promoted tokens \\
  $0.1\le S<1$          & $25$      & strong positive compression signal \\
  $0.01\le S<0.1$       & $47$      & moderate positive signal \\
  $0<S<0.01$            & $63$      & weak positive / noisy \\
  $-0.001\le S<0$       & $657$     & almost neutral negative \\
  $-0.01\le S<-0.001$   & $571$     & weak suppression \\
  $-0.1\le S<-0.01$     & $1{,}727$ & meaningful suppression \\
  $-1\le S<-0.1$        & $1{,}058$ & strong suppression \\
  $S<-1$                & $10$      & strongest negative shifts \\
  \bottomrule
  \end{tabular}}
\end{table}

\subsection{Promoted token families}
\label{app:compression-promoted-token-families}
The promoted side ($S>0$) contains all $141$ tokens with a positive log-probability shift; we list
every one, grouped into families with a short gloss.  Interpretation is collected in
Appendix~\ref{app:compression-interpretation}.

Families are ordered by their mean intervention score $\bar{S}$ (the average of $S(y)$ over the family's tokens).

\begin{itemize}
    \item \textbf{Stop / boundary tokens (6, $\bar{S}=0.61$).}
    \verb*!<|end_of_text|>!, \verb*| Problem|, \verb*|Problem|, \verb*|\n|, \verb*|:\n|, and
    \verb*| ##|. \verb*!<|end_of_text|>! and \verb*| Problem| are configured stop strings:
    \verb*!<|end_of_text|>! is the direct end-of-text token, and \verb*| Problem| terminates the
    completion at the next problem header (reaching a new problem effectively ends the current answer).
    The remaining \verb*|Problem|, \verb*|\n|, \verb*|:\n|, and \verb*| ##| are not stop strings but
    compact structural boundaries that let the model finish or segment without opening a long paragraph.
    This is the clearest compression signal: end now, or move to a boundary that makes ending natural.

    \item \textbf{Educational / page / UI / source artifacts (19, $\bar{S}=0.46$).}
    \verb*|Personal|, \verb*|Recommended|, \verb*| video|, \verb*|Basic|, \verb*|Win|, \verb*|Ac|,
    \verb*|User|, \verb*|Skill|, \verb*|Special|, \verb*|J|, \verb*| Ch|, \verb*|Credit|,
    \verb*| Display|, \verb*|black|, \verb*| America|, \verb*| engineering|, \verb*|-person|,
    \verb*| Day|, and \verb*| Mathematics|.  These are scraped-page, UI, source, or metadata artifacts
    rather than compression mechanisms---e.g.\ \verb*|Basic| (``Basic Algebra''), \verb*|User|
    (forum/wiki metadata), \verb*| Ch| (``Chapter''), \verb*| America| (``Mathematical Association of
    America'').  They are most likely correlational shortcuts that appeared near short page snippets or
    early-stop regions; the strongest promoted token being \verb*|Personal| is a warning that the
    intervention also picked up dataset artifacts.

    \item \textbf{Ordering / option / concise navigation (12, $\bar{S}=0.22$).}
    \verb*|Alt|, \verb*|Alternate|, \verb*|Alternative|, \verb*| central|, \verb*| Next|, \verb*|Next|,
    \verb*| first|, \verb*| Ways|, \verb*| best|, \verb*|Long|, \verb*| systematic|, and
    \verb*| changing|.  A mixed group: \verb*| Next|/\verb*|Next|/\verb*| first| are compact sequencing
    tokens and \verb*| best| supports answer selection, while \verb*|Alt|/\verb*|Alternate|/\verb*|Alternative|
    may be option labels or page artifacts (\verb*|Long|, \verb*| systematic|, \verb*| changing| are
    weak positives).

    \item \textbf{Minimal action / solution hooks (9, $\bar{S}=0.16$).}
    \verb*|show|, \verb*| Get|, \verb*| find|, \verb*| use|, \verb*| Use|, \verb*| given|,
    \verb*| Since|, \verb*| Define|, and \verb*|align|.  These are short solution hooks (e.g.\
    \verb*| Get| ``Get $x=3$'', \verb*| given| a compact reference to the conditions, \verb*| Since| a
    short justification starter).  The bias prefers skeletal action hooks over derivational prose
    (the imperative-vs-prose minimal pairs are collected in
    Appendix~\ref{app:compression-interpretation}).

    \item \textbf{Minimal grammar / punctuation scaffold (22, $\bar{S}=0.077$).}
    \verb*| We|, \verb*|The|, \verb*| the|, \verb*| The|, \verb*| A|, \verb*| This|, \verb*| |,
    \verb*| can|, \verb*| have|, \verb*|,|, \verb*| is|, \verb*| and|, \verb*| to|, \verb*| in|,
    \verb*| of|, \verb*| a|, \verb*| my|, \verb*| All|, \verb*|.|, \verb*| "|, \verb*|!)|, and
    \verb*| ...|.  Compression does not strip all prose: it keeps just enough grammar and punctuation
    to emit a compact, well-formed response (\verb*| | is a bare space token, \verb*| ...| an
    ellipsis/short-tail marker).

    \item \textbf{Short final-answer markers (11, $\bar{S}=0.055$).}
    \verb*| Final|, \verb*|Final|, \verb*| final|, \verb*| Answer|, \verb*| answer|, \verb*|Choice|,
    \verb*| lowest|, \verb*| number|, \verb*|Number|, \verb*| useful|, and \verb*| important|.  These
    let the model package a response as a final answer (``Final answer\dots'', ``The answer is\dots'',
    compact multiple-choice output) instead of producing a full worked solution.  (Leading-space forms
    occur inside compact sentences; see Appendix~\ref{app:compression-interpretation} for the
    promoted/suppressed minimal pairs.)

    \item \textbf{Compact math / \LaTeX{} / symbolic / geometry (33, $\bar{S}=0.038$).}
    \verb*|{$|, \verb*| $|, \verb*| $\|, \verb*| quadr|, \verb*!{|!, \verb*|$|, \verb*|begin|,
    \verb*|\|, \verb*|}|, \verb*| hex|, \verb*| \|, \verb*|tan|, \verb*|)|, \verb*| (|, \verb*|frac|,
    \verb*|triangle|, \verb*|AC|, \verb*|{|, \verb*| Square|, \verb*|=|, \verb*|x|, \verb*| -|,
    \verb*| =|, \verb*| Extended|, \verb*|EF|, \verb*|cdot|, \verb*|_C|, \verb*|QR|, \verb*|cs|,
    \verb*|cal|, \verb*|("$|, \verb*|("|, and \verb*|·| (token~115, a raw-byte artifact).  These support
    compact mathematical expression: inline math delimiters (\verb*| $|, \verb*|$|), command escapes
    (\verb*|\|), subword notation (\verb*|frac|, \verb*|cdot|, \verb*|tan|), geometry labels
    (\verb*|AC|, \verb*|EF|, \verb*|QR|), and compact algebra (\verb*|x|, \verb*|=|, \verb*| -|): the
    bias likes inline math, not display-math derivations (see the minimal pairs in
    Appendix~\ref{app:compression-interpretation}).

    \item \textbf{Numbers (9, $\bar{S}=0.011$).}
    \verb*|2|, \verb*|21|, \verb*|1|, \verb*|3|, \verb*|996|, \verb*|163|, \verb*|926|, \verb*|222|,
    and \verb*|354|.  Small digits (\verb*|1|, \verb*|2|, \verb*|3|) are useful in concise answers and
    enumeration; the larger numbers are weak positives and probably dataset-specific rather than
    generally meaningful.

    \item \textbf{Subword fragments / morphemes (20, $\bar{S}<0.001$).}
    \verb*|stit|, \verb*|cept|, \verb*|ector|, \verb*|oid|, \verb*|-negative|, \verb*|ilateral|,
    \verb*|istinguish|, \verb*|ate|, \verb*|om|, \verb*|ifying|, \verb*|ifies|, \verb*|ues|,
    \verb*|ithmetic|, \verb*|ee|, \verb*|_r|, \verb*|ultiply|, \verb*|itions|, \verb*|aring|,
    \verb*|oted|, and \verb*|anging|.  Tokenizer fragments that inherit meaning from the words they
    appear in (e.g.\ \verb*|ector| as \emph{vector/bisector}, \verb*|ilateral| as
    \emph{quadrilateral/equilateral}, \verb*|ithmetic| as \emph{arithmetic}); treat them as
    low-confidence weak positives.
\end{itemize}

\subsection{Suppressed token families}
\label{app:compression-suppressed-token-families}
The suppressed side ($S<0$) contains $4{,}023$ tokens and is best read as an anti-continuation prior:
these tokens are not ``wrong'' in isolation, but they tend to predict that the answer is about to get
longer.  The summaries below describe each family, ordered by mean intervention score $\bar{S}$ (most
strongly suppressed first); the complete per-token listings with intervention
scores are provided in the code repository under
\texttt{results/compression\_suppressed\_all\_families.tex}.

\begin{itemize}
    \item \textbf{Multiline formatting / display math / markdown expansion (15, $\bar{S}=-0.55$).}
    A very clean compression signal: multi-line formatting (\verb*| \n\n|), display math (\verb*| $$|),
    deeper headings (\verb*| ###|), and long spacing all predict a longer answer, whereas a single
    boundary or inline formula is compact.

    \item \textbf{Course-page / website / practice-mode tokens (29, $\bar{S}=-0.54$).}
    Among the strongest negatives (\verb*| Latest|, \verb*| Practice|, \verb*| Community|): page
    navigation, educational-site residue, and scraped tails that predict the model has moved away from
    the answer into website content.

    \item \textbf{Explanation / solution-section headings (18, $\bar{S}=-0.47$).}
    Tokens such as \verb*| Explanation|, \verb*| Solution|, \verb*| Approach|, and \verb*|Definition| are
    useful for tutoring but they tend to open a new section or paragraph, so compression penalizes
    entering explanation mode---a clear difference from the accuracy reward, which may keep such
    scaffolding.

    \item \textbf{Step-by-step derivation starters (27, $\bar{S}=-0.36$).}
    These begin visible reasoning (\verb*|Using|, \verb*|Given|, \verb*| Finding|), so the model is
    pushed away from ``show the work'' mode.

    \item \textbf{Checking / verification / self-correction (26, $\bar{S}=-0.32$).}
    Verification aids reliability but costs tokens (it usually follows the answer), so the bias
    suppresses it---at the risk of removing useful sanity checks.  \verb*| Check| is especially
    suppressed.

    \item \textbf{Names, people, and story nouns (170, $\bar{S}=-0.18$).}
    Narrative detail from problem statements such as \verb*| Lisa|, \verb*| Natalie|, and \verb*| Point|;
    a short answer should not invent people, holidays,
    objects, or scenery.

    \item \textbf{Programming / code / system-ish artifacts (23, $\bar{S}=-0.13$).}
    Code-like continuations such as \verb*| Java|, \verb*| Python|, \verb*| program|, and \verb*| Test|
    usually lengthen the output and drift from a compact final answer.

    \item \textbf{Math-topic tokens that invite derivation (85, $\bar{S}=-0.12$).}
    These are real math tokens; suppression does not mean ``avoid math'' but that they tend to open a
    topic-specific derivation or calculation, which is costly under a shorter-answer reward
    (\verb*| integration| is the clearest case).

    \item \textbf{Discourse connectives and paragraph extenders (1{,}669, $\bar{S}=-0.098$).}
    Tokens such as \verb*| But|, \verb*| Also|, \verb*| And|, and \verb*| Specifically| imply
    the answer is continuing---another sentence, case, or justification---an almost
    anti-chain-of-thought surface signature.

    \item \textbf{Subword fragments (1{,}218, $\bar{S}=-0.067$).}
    Fragments whose larger words tend to start longer explanations or theorem references
    (\verb*| Squ| \emph{Square}, \verb*| Con| \emph{Consider}, \verb*| Ferm| \emph{Fermat}).

    \item \textbf{Geometry labels / arbitrary point names / initials (203, $\bar{S}=-0.061$).}
    Some labels are promoted (\verb*|AC|, \verb*|EF|, \verb*|QR|) while others are suppressed
    (\verb*| OA|, \verb*| AB|, \verb*| OB|)---an
    empirical, not semantic, split; suppression helps avoid invented geometry scaffolding.

    \item \textbf{Arbitrary numbers (540, $\bar{S}=-0.023$).}
    A compression prior should not globally promote random constants such as \verb*|388|, \verb*|522|,
    and \verb*|540|; suppressing them reduces
    hallucinated numeric detail (the robust positive numeric signal is only for small digits).
\end{itemize}

\subsection{Interpretation}
\label{app:compression-interpretation}

A handful of contrasts are more informative than individual tokens, because they isolate the
length-control axis from token identity:
\begin{itemize}
    \item \verb*!<|end_of_text|>!, \verb*| Problem|, \verb*|Problem| promoted: stop / next-problem
    boundary behavior.
    \item \verb*|\n| promoted but \verb*| \n\n| suppressed: one boundary is compact, paragraph
    expansion is not.
    \item \verb*| ##| promoted but \verb*| ###| suppressed: a shallow boundary can help stop, deeper
    sectioning extends.
    \item \verb*|$|/\verb*| $| promoted but \verb*| $$| suppressed: inline math is compact, display
    math invites derivation.
    \item \verb*| Answer| promoted but \verb*|Answer| suppressed: a compact answer phrase beats
    heading/template mode.
    \item \verb*| Use| promoted but \verb*|Using| suppressed: an imperative hook beats a prose
    derivation starter.
    \item \verb*|Alternate| promoted but \verb*| Alternate| suppressed: a compact label beats an
    ``alternate solution'' continuation.
    \item \verb*| Check| suppressed: verification is useful but length-expanding.
    \item \verb*| integration| suppressed: math-topic words can be anti-compression when they open
    long derivations.
    \item \verb*| Latest|, \verb*| Practice|, \verb*| Community| strongly suppressed:
    page/navigation/practice tails are anti-answer.
\end{itemize}

Overall, the compression intervention learns a \emph{terminal, compact-answer style}: unlike the
accuracy bias it is not merely promotional but actively pushes down tokens that predict continuation.
Its main risk is therefore \emph{over-compression}, since useful verification, necessary derivation, and
explanatory clarity can be suppressed along with genuine verbosity (Section~\ref{sec:limitations}).
The code repository lists all $4{,}023$ suppressed tokens with their intervention scores under
\texttt{results/compression\_suppressed\_all\_families.tex}.

\section{Additional experimental details}
\label{app:exp-details}

\subsection{Prompt templates and GPQA formatting}
We use the default lm-evaluation-harness prompt templates for all tasks. In particular, the core format follows the standard task prompt pattern (e.g., ``Problem: \ldots Answer: \ldots'').

For GPQA-main, we format prompts so the model first provides an explanation and then outputs a final answer choice. We also shuffle the answer choices per example to avoid systematic bias toward any fixed label. For bias estimation, sampled positions are taken from the explanation-first completion before the final answer token sequence.

\subsection{Rollout-generation settings}
For bias estimation, we use dataset-specific rollout budgets and context lengths: MATH uses 64 rollouts per problem with train/test caps of 1024/1024 tokens and 128 sampled positions per rollout; GSM8K uses 128 rollouts with 512/1024 caps and 256 positions; GPQA uses 128 rollouts with 512/4096 caps and 128 positions.

\subsection{Tilting parameter $\tau$ and additive smoothing $\alpha$}
We tune the additive-smoothing constant $\alpha$ (defined in Section~\ref{sec:param}) on validation.

For accuracy experiments, the log-indicator reward $r_{\text{log-accuracy}}\in\{-\infty,0\}$ makes $\tau$ inert: $e^{r/\tau}\in\{0,1\}$ for any $\tau>0$, so the IPS estimator in Equation~\eqref{eq:Z-hat} reduces to an accuracy-weighted count and the tilt parameter drops out. We therefore only report $\alpha$: MATH uses $\alpha=0.052$, GSM8K uses $\alpha=0.110$, and GPQA uses $\alpha=0.007$.
For the compression experiment, Gemma uses $\alpha=0.738$ with $\tau=1.1$, Llama uses $\alpha=0.052$ with $\tau=0.7$, and Qwen uses $\alpha=0.465$ with $\tau=0.8$.

Sweeping $(\tau,\alpha)$ during model selection is inexpensive because the estimator factorizes as $\widehat{\mathbf Z}=\mathbf v^{1/\tau}\mathbf M$ (Proposition~\ref{prop:ips-matrix-form}): the rollout statistics $\mathbf M$ are computed once, and each $(\tau,\alpha)$ configuration is then obtained by a single reward-weighted matrix--vector product followed by the additive-smoothing step, without rescanning the rollouts.

\subsection{Cluster-tied baseline}
The cluster-tied baseline builds $K{=}32$ clusters from whitespace-split answer tokens and ties one bias value per cluster. Concretely, we run $k$-means on token/word representations from the answer corpus, then assign each vocabulary token to its nearest centroid and share the same learned bias within each cluster. This yields a low-parameter tied-bias baseline used for fairness comparisons against full-vocabulary logit bias and LoRA-GRPO.

\subsection{LoRA-GRPO baseline}
LoRA-GRPO uses rank $r=16$, $\alpha=32$, dropout $0.05$, target modules 
\texttt{q\_proj}, \texttt{k\_proj}, \texttt{v\_proj}, \texttt{o\_proj}, \texttt{gate\_proj},\texttt{up\_proj}, \texttt{down\_proj}, bf16, and gradient checkpointing, trained for one epoch. For accuracy experiments, the best checkpoint is selected among 3 validation evaluations.

Dataset-specific settings are:
\begin{itemize}
  \item GPQA: learning rate $5\times10^{-5}$, gradient accumulation 16, max completion 1024, eval max new tokens 4096.
  \item MATH/GSM8K: learning rate $2\times10^{-4}$, gradient accumulation 8, max completion 1024 for MATH and 512 for GSM8K, eval max new tokens 1024.
\end{itemize}

\subsection{Compression experiment}
The compression experiment evaluates three model identifiers:
\texttt{google/gemma-2-9b-it}, \\
\texttt{meta-llama/Meta-Llama-3.1-8B-Instruct}, and
\texttt{Qwen/Qwen3-4B}.

For length-aware LoRA-GRPO, we train one epoch on MATH with objective mode \texttt{corr*T/length}, $T{=}1024$, learning rate $2\times10^{-4}$, batch size 2, gradient accumulation 8, and LoRA $(r,\alpha,\text{dropout})=(16,32,0.05)$. We keep the best checkpoint among 32 evaluations.

For both logit bias and LoRA-GRPO in the compression experiment, model selection chooses the checkpoint or $\tau$-$\alpha$ pair that minimizes mean completion length in tokens among configurations whose accuracy is the closest to the base model.

The compression reward for the logit-bias method is $r_{\mathrm{length}}$, first defined in Equation~\eqref{eq:r-length} (Section~\ref{sec:experimental-setup}):
\[
r_{\mathrm{length}}(y)=\ln\!\left(\frac{T}{\operatorname{length}(y)}\right).
\]

\subsection{Compute resources and minor evaluation variance from batch chunking}

Experiments were run on multiple GPU setups, including RTX 4090, RTX 5090, A100, H100, and H200. The code supports multi-GPU execution.
During evaluation, prompts are batched with \texttt{padding=True} and left padding, so tensor shapes depend on which examples are grouped together. Changing \texttt{CONFIG\_EVAL\_CF\_WORKER\_CHUNK\_SIZE} changes how validation problems are partitioned across workers, which in turn changes batch composition and padding shapes.

Because evaluation uses fp16 on GPU, different batch shapes can induce small differences in floating-point execution order inside GPU kernels. These tiny numerical changes can slightly perturb logits. In borderline cases where two next-token logits are nearly tied, this can flip the greedy \texttt{argmax}, changing the generated completion and, for a small number of examples, accuracy.
Subject to the caveat above, the reported results are reproducible on a 4$\times$H200 machine with \texttt{CONFIG\_EVAL\_CF\_BATCH\_SIZE=128} and \texttt{CONFIG\_EVAL\_CF\_WORKER\_CHUNK\_SIZE=0}.

\subsection{Bootstrap confidence intervals}
\label{app:bootstrap}
We report 95\% percentile bootstrap confidence intervals for all evaluation metrics.
The procedure is the same in every experiment; only the resampling unit and the per-replicate statistic differ between the accuracy (non-paired bootstrap) and compression (paired bootstrap) settings.
In each case we draw \(B=10{,}000\) bootstrap replicates with fixed random seeds for reproducibility.
Each replicate resamples \(n\) evaluation examples with replacement, where \(n\) is the number of evaluated examples in the corresponding evaluation set (Table~\ref{tab:bootstrap_sample_sizes}), computes a replicate statistic, and the reported interval is the pair of empirical \(2.5\)th and \(97.5\)th percentiles of that statistic across the \(B\) replicates.
Here \(B\) controls only the Monte Carlo precision of the percentile estimates; it is not treated as additional data and does not change the statistical sample size, which is the evaluation-set size~\(n\).
Consequently, smaller evaluation sets, such as GPQA-main with \(n=45\), yield correspondingly wider intervals.

\begin{table}[t]
  \centering
  \caption{Evaluation sample sizes used for bootstrap confidence intervals. Each bootstrap replicate resamples \(n\) examples with replacement from the corresponding evaluation set.}
  \label{tab:bootstrap_sample_sizes}
  \begin{tabular}{llcc}
    \toprule
    Experiment & Benchmark & Evaluation examples \(n\) & Resampling unit \\
    \midrule
    Accuracy & MATH & 5000 & example correctness \\
    Accuracy & GSM8K & 1319 & example correctness \\
    Accuracy & GPQA-main & 45 & example correctness \\
    Compression & MATH & 5000 & problem index / paired example \\
    \bottomrule
  \end{tabular}
\end{table}

\subsubsection{Accuracy}
For accuracy we resample over examples. Let
\[
c_i=\mathbf{1}\{m_{x_i}(y^{(i)}_{1:T})=m_{\star,i}\},\qquad i=1,\ldots,n,
\]
denote the correctness indicator for example \(i\).
For replicate \(b\in\{1,\ldots,B\}\), let \(c_i^{*(b)}\) be the \(i\)-th indicator drawn with replacement from \(\{c_i\}_{i=1}^n\), and form the replicate mean
\[
\bar{c}_n^{*(b)}=\frac{1}{n}\sum_{i=1}^n c_i^{*(b)}.
\]
The reported 95\% confidence interval is
\[
\left[
Q_{0.025}\left(\{\bar{c}_n^{*(b)}\}_{b=1}^B\right),
Q_{0.975}\left(\{\bar{c}_n^{*(b)}\}_{b=1}^B\right)
\right],
\]
where \(Q_p\) denotes the empirical \(p\)-quantile.

\subsubsection{Compression}
For comparisons in the compression experiment, the baseline and candidate methods are evaluated on the same examples. Let
\[
d_i=\ell_i^{\mathrm{candidate}}-\ell_i^{\mathrm{baseline}}
\]
be the per-example paired difference in completion length in tokens, or in the relevant numeric metric.
For replicate \(b\in\{1,\ldots,B\}\), let \(d_i^{*(b)}\) be the \(i\)-th paired difference drawn with replacement from \(\{d_i\}_{i=1}^n\), and form the replicate mean
\[
\bar{d}_n^{*(b)}=\frac{1}{n}\sum_{i=1}^n d_i^{*(b)}.
\]
We report the \(2.5\)th and \(97.5\)th empirical percentiles of
\(\{\bar{d}_n^{*(b)}\}_{b=1}^B\) as the 95\% confidence interval for the mean difference.

\section{Unbiasedness of the IPS estimator}
\label{app:finite-sample-est}
\subsection{Matrix form of the IPS estimator}\label{app:ips-matrix-form}

\begin{proposition}\label{prop:ips-matrix-form}
Define
\[
v_n := \exp\!\left(r^{(n)}\right),
\qquad
M_{n,y} := \frac{1}{NP}\sum_{p=1}^{P}
\frac{\mathbf 1\!\left\{y_{t^{(n,p)}}^{(n,p)}=y\right\}}
{\pi_0\!\left(y \mid x^{(n)}, y_{1:t^{(n,p)}-1}^{(n,p)}\right)}.
\]
Then the IPS estimator from Equation~\eqref{eq:Z-hat} satisfies
\[
\widehat Z(y)=\sum_{n=1}^{N} v_n^{1/\tau} M_{n,y},
\]
equivalently,
\[
\widehat{\mathbf Z}=\mathbf v^{\,1/\tau}\mathbf M.
\]
\end{proposition}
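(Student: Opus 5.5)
The identity is a regrouping of the finite sum in Equation~\eqref{eq:Z-hat}. I would index the dataset $\mathcal{D}$ returned by Algorithm~\ref{alg:data-gen} by the pair $(n,p)\in\{1,\dots,N\}\times\{1,\dots,P\}$. This lets me write
\[
\widehat Z(y)=\frac{1}{NP}\sum_{n=1}^{N}\sum_{p=1}^{P}\frac{\mathbf 1\{y^{(n,p)}_{t^{(n,p)}}=y\}\,\exp\bigl(r^{(n,p)}/\tau\bigr)}{\pi_0\bigl(y\mid x^{(n,p)},y^{(n,p)}_{1:t^{(n,p)}-1}\bigr)}.
\]
I would treat $\mathcal{D}$ as a multiset here. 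If two sampled tuples happened to coincide, each would still contribute its own term, which is exactly what Equation~\eqref{eq:Z-hat} intends. In any case, the positions are sampled without replacement, so within a fixed $n$ the tuples differ in $t$ and no two coincide.

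Next I would use the copying step of Algorithm~\ref{alg:data-gen}, which sets $x^{(n,p)}=x^{(n)}$ and $r^{(n,p)}=r^{(n)}$ for every $p$. With this, the weight $\exp(r^{(n,p)}/\tau)=\exp(r^{(n)}/\tau)$ does not depend on $p$ and factors out of the inner sum.

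I then rewrite that factor through $v_n$ as $\exp(r^{(n)}/\tau)=\bigl(\exp(r^{(n)})\bigr)^{1/\tau}=v_n^{1/\tau}$. This holds for real $r^{(n)}$ because $\exp(a)^{1/\tau}=\exp(a/\tau)$. It also holds in the $r=-\infty$ case used in the accuracy experiments, under the convention $e^{-\infty}=0$ and $0^{1/\tau}=0$ for $\tau>0$.

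What is left inside, together with the prefactor $1/(NP)$, is by definition $M_{n,y}$. This gives $\widehat Z(y)=\sum_n v_n^{1/\tau}M_{n,y}$. Reading $\mathbf v^{1/\tau}$ as the row vector with entries $v_n^{1/\tau}$ and $\mathbf M$ as the $N\times|\mathbb V|$ matrix with entries $M_{n,y}$, this is the $y$-th entry of the vector--matrix product, which is the stated matrix form.

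No step here is a real obstacle. The only points that need care are the bookkeeping of the $(n,p)$ indexing against the set notation in Equation~\eqref{eq:Z-hat}, and the extended-real convention for $v_n^{1/\tau}$ when the reward is $-\infty$.
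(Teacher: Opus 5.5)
Your proposal is correct and is essentially the paper's proof: index $\mathcal{D}$ by $(n,p)$, use the copying step $x^{(n,p)}=x^{(n)}$, $r^{(n,p)}=r^{(n)}$ to pull $e^{r^{(n)}/\tau}=v_n^{1/\tau}$ out of the inner sum over $p$, and recognize what remains as $M_{n,y}$. Your extra remarks on reading $\mathcal{D}$ as a multiset and on the convention $0^{1/\tau}=0$ when $r^{(n)}=-\infty$ are sound refinements that the paper leaves implicit.
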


\begin{proof}
Indexing each entry of $\mathcal{D}$ by its origin $(n,p)$ in Algorithm~\ref{alg:data-gen}, Equation~\eqref{eq:Z-hat} can be written as
\[
\widehat Z(y)
:=\frac{1}{NP}\sum_{n=1}^{N}\sum_{p=1}^{P}
\frac{\mathbf{1}\!\left\{y_{t^{(n,p)}}^{(n,p)}=y\right\}\,e^{r^{(n,p)}/\tau}}{\pi_0\!\left(y\,\bigm|\,x^{(n,p)},\,y_{1:t^{(n,p)}-1}^{(n,p)}\right)}.
\]
In Algorithm~\ref{alg:data-gen}, each replicated sample satisfies $x^{(n,p)}=x^{(n)}$ and $r^{(n,p)}=r^{(n)}$, so $e^{r^{(n,p)}/\tau}=v_n^{1/\tau}$ is independent of $p$. Factoring this term outside the inner sum gives
\[
\widehat Z(y)=\sum_{n=1}^{N} v_n^{1/\tau}
\left(\frac{1}{NP}\sum_{p=1}^{P}
\frac{\mathbf 1\!\left\{y_{t^{(n,p)}}^{(n,p)}=y\right\}}
{\pi_0\!\left(y \mid x^{(n)}, y_{1:t^{(n,p)}-1}^{(n,p)}\right)}\right)
:=\sum_{n=1}^{N} v_n^{1/\tau} M_{n,y}.
\]
Stacking over $y\in \mathbb{V}$ yields $\widehat{\mathbf Z}=\mathbf v^{\,1/\tau}\mathbf M$.
\end{proof}

Factoring $\widehat{\mathbf Z}=\mathbf v^{1/\tau}\mathbf M$ separates rollout statistics from reward weights.
The matrix $\mathbf M$ depends only on sampled tokens and base-model probabilities, so it can be
computed once; changing the reward transform or temperature only changes $\mathbf v^{1/\tau}$,
making recomputation a single matrix-vector product.

\subsection{Unbiasedness}

\begin{proposition}\label{prop:ips-unbiased}
Let $\widehat{Z}(y)$ be the IPS estimator in Equation~\eqref{eq:Z-hat}, computed on the dataset $\mathcal{D}$ produced by Algorithm~\ref{alg:data-gen}. Assume $\pi_0(y\mid x, y_{1:t-1})>0$ for every prompt $x$ in the support of $\rho$, every $t\in\{1,\dots,T\}$, and every prefix $y_{1:t-1}$ reachable under $p_0(\cdot\mid x)$. Then
\[
  \mathbb{E}[\widehat{Z}(y)] = Z_{\mathrm{avg}}(y),
\]
where the expectation is over the randomness of the estimator.
\end{proposition}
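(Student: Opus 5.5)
By linearity of expectation it suffices to show that each summand of Equation~\eqref{eq:Z-hat} has expectation $Z_{\mathrm{avg}}(y)$. The normalization $1/(NP)$ matches the $NP$ entries of $\mathcal{D}$, so the average of these identical expectations is again $Z_{\mathrm{avg}}(y)$. The dependence between entries that share a trajectory (same $n$, different $p$) is irrelevant here, since only first moments enter. Fix one entry $(x,y_{1:T},t,r)$ and work out its marginal law. We have $x\sim\rho$ and $y_{1:T}\sim p_0(\cdot\mid x)$. The position $t$ is independent of the trajectory, and although $P$ positions are drawn without replacement, each individual $t^{(n,p)}$ is marginally $\mathrm{Unif}\{1,\dots,T\}$. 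Finally $r=r(x,y_{1:T})$.

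Next we condition on $(x,t,y_{1:t-1})$ and integrate out $y_t$ and the suffix. Under $p_0$, the token $y_t$ has law $\pi_0(\cdot\mid x,y_{1:t-1})$, and given $y_t=y$ the suffix is drawn from $p_0(\cdot\mid x,y_{1:t-1},y)$. Hence
\[
\mathbb{E}\!\left[\frac{\mathbf 1\{y_t=y\}e^{r(x,Y)/\tau}}{\pi_0(y\mid x,y_{1:t-1})}\,\middle|\,x,t,y_{1:t-1}\right]
=\frac{\pi_0(y\mid x,y_{1:t-1})}{\pi_0(y\mid x,y_{1:t-1})}\,Z(x,y_{1:t-1},y)=Z(x,y_{1:t-1},y).
\]
Here we use the definition of the soft value in Equation~\eqref{eq:soft-value}. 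The full-support assumption guarantees that the denominator is nonzero on every reachable prefix, so the ratio is well defined and the cancellation is legitimate.

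Taking the outer expectation over $x\sim\rho$, $t\sim\mathrm{Unif}\{1,\dots,T\}$ and $y_{1:t-1}\sim p_0(\cdot\mid x)$ then gives exactly the second expression for $Z_{\mathrm{avg}}(y)$ in Equation~\eqref{eq:Z-avg}. The equality of the two expressions in Equation~\eqref{eq:Z-avg} is simply the definition of the $\mathrm{do}$-intervention: prefix from $p_0$, then $y_t=y$ fixed, then suffix from $p_0$.

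The step that needs the most care is the bookkeeping of the marginal law of a single entry. One must argue that sampling without replacement still gives a uniform marginal for each position, and that $t$ is independent of $(x,y_{1:T})$. One should also note that the reward in Equation~\eqref{eq:Z-hat} enters as $e^{r/\tau}$, whereas Proposition~\ref{prop:ips-matrix-form} writes it as $v_n^{1/\tau}$ with $v_n=e^{r^{(n)}}$; these agree. If $r$ may take the value $-\infty$, as in the log-accuracy reward, we adopt the convention $e^{-\infty}=0$, which keeps all terms bounded.
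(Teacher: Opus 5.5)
Your proof is correct and follows essentially the same route as the paper's. Both use linearity over the $NP$ summands, then cancel $\pi_0(y\mid x,y_{1:t-1})$ through the autoregressive factorization, then apply the tower rule over $x\sim\rho$ and $t\sim\mathrm{Unif}\{1,\dots,T\}$. The differences are cosmetic: you also condition on the prefix and land on the soft-value form of Equation~\eqref{eq:Z-avg} rather than the $\mathrm{do}$-form, and you state explicitly that each position drawn without replacement is marginally uniform and independent of the trajectory, which the paper uses only implicitly.
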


\begin{remark}[Non-i.i.d.\ samples, still unbiased]
When $P>1$, summands sharing a common $n$ are not independent---they share the prompt, trajectory, and reward---but unbiasedness only requires correct termwise means, which the proof verifies. The dependence affects variance, not mean.
\end{remark}

\begin{proof}
By linearity of expectation, it suffices to show that each summand has mean $Z_{\mathrm{avg}}(y)$. Fix $(n,p)$ and condition on $x^{(n,p)}=x$, $t^{(n,p)}=t$. The autoregressive factorization $p_0(y_{1:T}\mid x)=p_0(y_{1:t-1}\mid x)\,\pi_0(y_t\mid x,y_{1:t-1})\,p_0(y_{t+1:T}\mid x,y_{1:t})$, together with the indicator $\mathbf{1}\{y_t=y\}$, cancels the $\pi_0(y\mid x,y_{1:t-1})$ factor in the denominator, so
\[
\mathbb{E}\!\left[\frac{\mathbf{1}\{y_t=y\}\,e^{r(x,Y)/\tau}}{\pi_0(y\mid x,y_{1:t-1})}\,\bigg|\,x,t\right]
= \mathbb{E}_{\substack{y_{1:t-1}\sim p_0(\cdot\mid x)\\y_{t+1:T}\sim p_0(\cdot\mid x,y_{1:t-1},y)}}\!\left[e^{r(x,\,y_{1:t-1},\,y,\,y_{t+1:T})/\tau}\right].
\]
The right-hand side is exactly the interventional $\mathrm{do}(y_t=y)$ expectation that defines $Z_{\mathrm{avg}}(y)$ in Equation~\eqref{eq:Z-avg}: the importance weighting realizes the distribution $p_0(\cdot\mid x,\mathrm{do}(y_t=y))$, drawing the prefix and suffix from the base model with $y$ fixed at position~$t$.
Taking the outer expectation over $x\sim\rho$ and $t\sim\mathrm{Unif}\{1,\dots,T\}$ via the tower rule matches Equation~\eqref{eq:Z-avg}, so each summand has mean $Z_{\mathrm{avg}}(y)$. \qedhere
\end{proof}

\section{Bias Parameterization Algorithms}
\label{app:method-algorithms}

This appendix provides the full algorithmic details for the two bias parameterizations introduced in Section~\ref{sec:method}.
All take as input the dataset $\mathcal{D}$ produced by Algorithm~\ref{alg:data-gen}.

\subsection{Per-token bias}\label{app:per-token-alg}

Algorithm~\ref{alg:sampling} computes the per-token IPS estimator $\widehat{Z}(y)$ from Equation~\eqref{eq:Z-hat} by accumulating, for each sampled position, 
the weight $e^{r/\tau}/\pi_0(y\mid x, y_{1:t-1})$ into the bin of the observed token identity.
 The per-token bias $\widehat{\delta}(y)=\log\widehat{Z}(y)+c_0$ is then obtained by the step in Section~\ref{sec:param}.

\begin{algorithm}[t]
\caption{\textsc{PerTokenIPS}: per-token IPS estimator $\widehat{Z}(y)$}
\label{alg:sampling}
\begin{algorithmic}[1]
\Require dataset \(\mathcal{D}=\{(x^{(n,p)},\,y_{1:T}^{(n,p)},\,t^{(n,p)},\,r^{(n,p)})\}_{n,p=1}^{N,P}\), base model \(\pi_0\), temperature \(\tau\)
\State \(S(\cdot)\gets 0\)
\For{\((x,\,y_{1:T},\,t,\,r)\in\mathcal{D}\)}
    \State \(\pi \gets \pi_0\bigl(y_t\,\bigm|\,x,\,y_{1:t-1}\bigr)\) \Comment{base-model probability of the on-policy token at position $t$}
    \State \(S(y_t)\gets S(y_t) + e^{r/\tau}\,/\,\pi\)
\EndFor
\ForAll{token \(y\) with \(S(y)>0\)}
    \State \(\widehat{Z}(y)\gets S(y)/(NP)\)
\EndFor
\State \Return \(\{\widehat{Z}(y)\}_{y\in \mathbb{V}_+}\)
\end{algorithmic}
\end{algorithm}

\subsection{Cluster-tied bias}\label{app:cluster-procedure}

The cluster-tied parameterization reduces the number of learned bias values by tying together vocabulary tokens with similar string embeddings.  It first builds a vocabulary partition
\(\{C(1),\dots,C(K)\}\), then replaces the per-token IPS estimate by the average estimate within the token's cluster.

Let
\[
  \operatorname{detok}:\mathbb{V}^{*}\to\Sigma^{*}
\]
denote the tokenizer detokenization map, which takes a token \emph{sequence} as input.  For a single token \(y\in\mathbb{V}\), write \((y)\) for the length-one sequence containing it and define its string form as the detokenization of that singleton sequence,
\[
  \operatorname{str}(y):=\operatorname{detok}\bigl((y)\bigr).
\]
Thus \(\operatorname{str}(y)\) is the tokenizer's surface string for the one-token sequence \((y)\).  We cluster the vocabulary token by token, so each token is embedded through its own surface string rather than through any surrounding sentence.

\paragraph{Cluster construction.}
Clusters are built from an auxiliary prompt set
\[
  \mathcal{X}_{c}=\{x_c^{(m)}\}_{m=1}^{N_c},
\]
chosen independently of the rollout prompts used to form \(\mathcal{D}\).  For each auxiliary prompt, we sample a completion
\[
  y_{1:T}^{c,(m)}\sim p_0(\cdot\mid x_c^{(m)})
\]
and detokenize it to a string
\[
  u^{(m)}:=\operatorname{detok}\!\left(y_{1:T}^{c,(m)}\right).
\]
We then form a string corpus
\[
  \mathcal{S}
  :=
  \left(\bigcup_{m=1}^{N_c}\operatorname{words}(u^{(m)})\right)
  \cup
  \mathcal{S}_{\mathrm{special}},
\]
where \(\operatorname{words}(u)\) denotes the set of whitespace-delimited words in \(u\), and
\(\mathcal{S}_{\mathrm{special}}\) contains tokenizer-specific vocabulary strings that should be clustered explicitly, such as EOS, control, whitespace-only, newline, and byte-fallback tokens.

We embed each \(s\in\mathcal{S}\) with \texttt{sentence-transformers/all-mpnet-base-v2} (Sentence-BERT on MPNet; \citealp{reimers2019sentence,song2020mpnet,sentence_transformers_all_mpnet_base_v2}), i.e., \(\mathrm{Encoder}(s)\in\mathbb{R}^d\), run \(K\)-means, and obtain centroids
\(\{\mu_k\}_{k=1}^{K}\).  Each vocabulary token is assigned to its nearest centroid by
\[
  c(y)
  :=
  \arg\min_{k\in\{1,\dots,K\}}
  \left\|\mathrm{Encoder}(\operatorname{str}(y))-\mu_k\right\|_2^2,
  \qquad y\in\mathbb{V}.
\]
This induces the partition
\[
  C(k):=\{y\in\mathbb{V}:c(y)=k\},
  \qquad k=1,\dots,K.
\]

\paragraph{Cluster-tied estimation.}
Given the partition, we first compute the per-token IPS estimates
\(\{\widehat Z(y)\}_{y\in\mathbb{V}_+}\) using Algorithm~\ref{alg:sampling}.  Then each cluster receives the average of its observed token estimates:
\[
  \bar Z(k)
  :=
  \frac{1}{|C(k)\cap\mathbb{V}_+|}
  \sum_{y\in C(k)\cap\mathbb{V}_+}\widehat Z(y).
\]
The tied estimate for a token is
\[
  Z_{\mathrm{tied}}(y):=\bar Z(c(y)).
\]

\begin{algorithm}[t]
\caption{Cluster-tied \(\bar Z\) estimation}
\label{alg:cluster-tied-z}
\begin{algorithmic}[1]
\Require rollout dataset \(\mathcal{D}\), base model \(\pi_0\), temperature \(\tau\)
\Require auxiliary prompts \(\mathcal{X}_{c}\), embedding map \(\mathrm{Encoder}\), number of clusters \(K\)
\State \(\mathcal{S}\gets\emptyset\)
\ForAll{\(x_c\in\mathcal{X}_{c}\)}
    \State Sample \(y_{1:T}^{c}\sim p_0(\cdot\mid x_c)\)
    \State \(u\gets\operatorname{detok}(y_{1:T}^{c})\) \Comment{decode the rollout to a text string}
    \State \(\mathcal{S}\gets\mathcal{S}\cup\operatorname{words}(u)\) \Comment{\(\operatorname{words}(\cdot)\): split into whitespace-delimited words}
\EndFor
\State \(\mathcal{S}\gets\mathcal{S}\cup\mathcal{S}_{\mathrm{special}}\)
\State Fit \(K\)-means to \(\{\mathrm{Encoder}(s):s\in\mathcal{S}\}\), obtaining centroids \(\{\mu_k\}_{k=1}^{K}\)
\ForAll{\(y\in\mathbb{V}\)}
    \State \(c(y)\gets\arg\min_{k}\left\|\mathrm{Encoder}(\operatorname{str}(y))-\mu_k\right\|_2^2\)
\EndFor
\State \(\{\widehat Z(y)\}_{y\in\mathbb{V}_+}\gets\textsc{PerTokenIPS}(\mathcal{D},\pi_0,\tau)\) \Comment{Algorithm~\ref{alg:sampling}}
\For{\(k=1,\dots,K\)}
    \State \(C(k)\gets\{y\in\mathbb{V}:c(y)=k\}\)
    \State \(\bar Z(k)\gets
    \frac{1}{|C(k)\cap\mathbb{V}_+|}
    \sum_{y\in C(k)\cap\mathbb{V}_+}\widehat Z(y)\)
\EndFor
\ForAll{\(y\in\mathbb{V}\)}
    \State \(Z_{\mathrm{tied}}(y)\gets\bar Z(c(y))\)
\EndFor
\State \Return \(\{Z_{\mathrm{tied}}(y)\}_{y\in\mathbb{V}}\)
\end{algorithmic}
\end{algorithm}

\section{Proofs}
\label{app:proofs}

This appendix collects all deferred proofs.
We first establish two technical lemmas (Appendices~\ref{app:softmax-kl}--\ref{app:kl-lower}) that are used throughout, then prove the results from the main text in the order they appear.

\subsection{Softmax--KL bound}\label{app:softmax-kl}

\begin{lemma}\label{lem:softmax-kl}
For all $u,z,z'\in\mathbb{R}^{|\mathbb{V}|}$,
\[
  \operatorname{KL}\!\bigl(\operatorname{softmax}(u+z)\,\|\,\operatorname{softmax}(u+z')\bigr)
  \le \frac{1}{4}\|z-z'\|_2^2.
\]
\end{lemma}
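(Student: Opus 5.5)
We view the KL divergence between two softmax distributions as a Bregman divergence of the log-partition function and bound it by a second-order Taylor estimate. Let $A(w):=\log\sum_{y}e^{w_y}$, so that $\operatorname{softmax}(w)=\nabla A(w)$. For $p=\operatorname{softmax}(a)$ and $q=\operatorname{softmax}(b)$ with $a=u+z$, $b=u+z'$, expanding the logarithms gives the standard identity
\[
  \operatorname{KL}(p\,\|\,q)=\sum_y p_y\bigl(a_y-b_y\bigr)-A(a)+A(b)
  = A(b)-A(a)-\langle \nabla A(a),\,b-a\rangle .
\]
This is the Bregman divergence $D_A(b,a)$. The common offset $u$ drops out of the difference $b-a=z'-z$, but it still fixes the base point.

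Next, we apply Taylor's theorem with integral remainder along the segment $a+s(b-a)$, $s\in[0,1]$:
\[
  D_A(b,a)=\int_0^1(1-s)\,(b-a)^\top\nabla^2A\bigl(a+s(b-a)\bigr)(b-a)\,ds .
\]
The Hessian of the log-partition is the covariance matrix $\nabla^2A(w)=\operatorname{diag}(\pi)-\pi\pi^\top$ with $\pi=\operatorname{softmax}(w)$. Hence the quadratic form $v^\top\nabla^2A(w)v$ equals $\operatorname{Var}_{Y\sim\pi}(v_Y)$.

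The key step, and the only place where the constant matters, is to bound this variance uniformly in $\pi$. The naive bound is $v^\top\operatorname{diag}(\pi)v\le\max_y v_y^2\le\|v\|_2^2$, which gives only the constant $1/2$. To get $1/4$, we instead use Popoviciu's inequality: a random variable supported in $[\min_y v_y,\max_y v_y]$ has variance at most $(\max v-\min v)^2/4$. We then note that
\[
  (\max_y v_y-\min_y v_y)^2\le 2\|v\|_2^2 .
\]
This follows because if $v_i$ and $v_j$ attain the max and min, then $(v_i-v_j)^2\le 2(v_i^2+v_j^2)$. Combining the two gives $\operatorname{Var}_\pi(v_Y)\le\frac12\|v\|_2^2$ for every $\pi$.

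Plugging this uniform bound into the integral with $\int_0^1(1-s)\,ds=\frac12$ yields
\[
  \operatorname{KL}\le\tfrac12\cdot\tfrac12\|z-z'\|_2^2=\tfrac14\|z-z'\|_2^2 ,
\]
which is the claimed bound. The main obstacle is exactly this variance step. Simply dominating the Hessian by $\operatorname{diag}(\pi)$ loses a factor of two, so the argument must exploit both the centering term $-\pi\pi^\top$ (through Popoviciu's inequality) and the elementary inequality relating the range of $v$ to its Euclidean norm.
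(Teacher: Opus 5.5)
Your proof is correct and takes essentially the same route as the paper: write the KL divergence as the Bregman divergence of the log-partition function, bound the Hessian $\operatorname{Diag}(\pi)-\pi\pi^\top$ by $\tfrac12 I$, and finish with the $L/2$ quadratic upper bound (your $\int_0^1(1-s)\,ds=\tfrac12$). The only difference is how the spectral bound is justified: you use Popoviciu's inequality with $(\max_y v_y-\min_y v_y)^2\le 2\|v\|_2^2$ and an explicit Taylor remainder, while the paper uses Gershgorin discs and cites Nesterov's Theorem~2.1.5.
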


\begin{proof}
Define the log-partition function $\psi(s):=\log\!\bigl(\sum_{a\in \mathbb{V}}e^{u_a+s_a}\bigr)$.
Its gradient is $\nabla\psi(s)=\operatorname{softmax}(u+s)=:p(s)$, and its Hessian is $\nabla^2\psi(s)=\operatorname{Diag}(p(s))-p(s)\,p(s)^\top$.
By Gershgorin's theorem, every eigenvalue of $\nabla^2\psi(s)$ lies in $[0,\tfrac{1}{2}]$: the $i$-th diagonal entry is $p_i(1-p_i)$ and the $i$-th absolute row sum off the diagonal is $\sum_{j\neq i}p_i p_j=p_i(1-p_i)$, so the $i$-th Gershgorin disc is $[0,2p_i(1-p_i)]\subseteq[0,\tfrac{1}{2}]$.
Hence $\psi$ has $\tfrac{1}{2}$-Lipschitz gradient.

By Theorem~2.1.5 of \citet{nesterov2004introductory}, the Bregman divergence of a convex function with $L$-Lipschitz gradient satisfies $D_\psi(z',z)\le \frac{L}{2}\|z'-z\|_2^2$.
With $L=\tfrac{1}{2}$ this gives $D_\psi(z',z)\le\frac{1}{4}\|z'-z\|_2^2$.
It remains to identify $D_\psi$ with the KL divergence.
Let $p=\operatorname{softmax}(u+z)=\nabla\psi(z)$.
Then
\begin{align*}
  D_\psi(z',z)
  &= \psi(z')-\psi(z)-\langle\nabla\psi(z),z'-z\rangle \\
  &= \psi(z')-\psi(z)-\sum_a p_a(z'_a-z_a) \\
  &= \sum_a p_a\bigl[\log p_a - (u_a+z'_a-\psi(z'))\bigr] \\
  &= \sum_a p_a\log\frac{p_a}{\operatorname{softmax}(u+z')_a}
  = \operatorname{KL}\!\bigl(\operatorname{softmax}(u+z)\,\|\,\operatorname{softmax}(u+z')\bigr). \qedhere
\end{align*}
\end{proof}

\subsection{KL lower bound via exponential tilting}\label{app:kl-lower}

\begin{lemma}\label{lem:kl-lower}
Fix a prompt~$x$.
Let $r:\mathcal{X}\times\mathbb{V}^T\to[0,R]$, $\sigma^2_0:=\operatorname{Var}_{Y\sim p_0(\cdot\mid x)}(r(x,Y))$, and $q_s(y_{1:T}\mid x):=p_0(y_{1:T}\mid x)\,e^{s\,r(x,y_{1:T})}/Z_s$ for $s\ge 0$.
Then
\[
  \operatorname{KL}(q_s\|p_0)
  = \int_0^s t\,\operatorname{Var}_{q_t}(r)\,dt
  \;\ge\; \frac{\sigma^2_0}{R^2}\bigl(1-e^{-Rs}(1+Rs)\bigr).
\]
\end{lemma}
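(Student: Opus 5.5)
The plan is to treat $s\mapsto q_s$ as a one-parameter exponential family. Derive the integral identity by differentiating the KL in $s$, then lower-bound the variance along the path by a pointwise density comparison with $p_0$.

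\textbf{Step 1: the identity.} Fix $x$ and write $A(s):=\log Z_s=\log\mathbb{E}_{Y\sim p_0(\cdot\mid x)}[e^{s\,r(x,Y)}]$. Since $\mathbb{V}^T$ is finite, $A$ is smooth and all differentiations under the sum are trivial. The standard cumulant identities give $A'(s)=\mathbb{E}_{q_s}[r]$ and $A''(s)=\operatorname{Var}_{q_s}(r)$. Since $\log(q_s/p_0)=s\,r-A(s)$, we have
\[
  \operatorname{KL}(q_s\|p_0)=s\,\mathbb{E}_{q_s}[r]-A(s)=s\,A'(s)-A(s).
\]
Differentiating gives $\frac{d}{ds}\operatorname{KL}(q_s\|p_0)=A'(s)+sA''(s)-A'(s)=s\operatorname{Var}_{q_s}(r)$. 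At $s=0$ we have $q_0=p_0$, so the KL vanishes there. The fundamental theorem of calculus then yields $\operatorname{KL}(q_s\|p_0)=\int_0^s t\operatorname{Var}_{q_t}(r)\,dt$.

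\textbf{Step 2: variance comparison.} This is the only nontrivial step. The aim is $\operatorname{Var}_{q_t}(r)\ge e^{-Rt}\sigma_0^2$ for $t\ge 0$.
\begin{itemize}
\item Because $r\in[0,R]$, we have $Z_t\le e^{tR}$ and $e^{t r}\ge 1$. Hence the likelihood ratio satisfies $q_t(y)/p_0(y)=e^{t r(x,y)}/Z_t\ge e^{-tR}$ pointwise.
\item Use the variational form $\operatorname{Var}_q(r)=\min_{c\in\mathbb{R}}\mathbb{E}_q[(r-c)^2]$. The integrand $(r-c)^2$ is nonnegative.
\item Combining the two, $\mathbb{E}_{q_t}[(r-c)^2]\ge e^{-tR}\,\mathbb{E}_{p_0}[(r-c)^2]\ge e^{-tR}\sigma_0^2$ for every $c$.
\item Minimizing over $c$ gives the claim.
\end{itemize}
The point to be careful about is using the minimum-over-$c$ form. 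Comparing the two variances directly would be awkward because the means differ under $q_t$ and $p_0$.

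\textbf{Step 3: integrate.} Plugging Step 2 into the identity gives
\[
\operatorname{KL}(q_s\|p_0)\ge\sigma_0^2\int_0^s t e^{-Rt}\,dt.
\]
Integration by parts evaluates the integral as $\frac{1}{R^2}\bigl(1-e^{-Rs}(1+Rs)\bigr)$, which is the stated bound.
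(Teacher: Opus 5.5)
Your proof is correct. Steps 1 and 3 coincide with the paper's argument: the paper writes $\Lambda$ for your $A$, sets $F(s)=s\Lambda'(s)-\Lambda(s)$, and uses $F'(s)=s\Lambda''(s)$ and $F(0)=0$.

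The genuine difference is the variance comparison.

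\textbf{The paper's route.} It argues dynamically. It differentiates $V(t)=\operatorname{Var}_{q_t}(r)$ to obtain the third central moment, $V'(t)=\mathbb{E}_{q_t}[(r-\mathbb{E}_{q_t}r)^3]$. It then uses $|r-\mathbb{E}_{q_t}r|\le R$ to derive the differential inequality $V'(t)\ge -R\,V(t)$, and concludes by Gr\"onwall.

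\textbf{Your route.} You argue statically at each fixed $t$, by a change of measure. You combine the pointwise bound $q_t/p_0\ge e^{-tR}$ with $\operatorname{Var}_q(r)=\min_c\mathbb{E}_q[(r-c)^2]$, which cleanly sidesteps the mismatch of means.

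\textbf{What each buys.} Your argument is more elementary: it needs no third-cumulant identity and no ODE comparison. It also applies verbatim to any reweighting whose likelihood ratio is bounded below, not only to exponential tilts. Before you relax $Z_t\le e^{tR}$, it gives the sharper bound $\operatorname{Var}_{q_t}(r)\ge\sigma_0^2/Z_t$, since $e^{tr}\ge 1$. The paper's Gr\"onwall argument recovers this same sharper bound only if one replaces $|r-\mathbb{E}_{q_t}r|\le R$ by the one-sided estimate $r-\mathbb{E}_{q_t}r\ge-\mathbb{E}_{q_t}r$ (valid since $r\ge 0$), because $\int_0^t\mathbb{E}_{q_u}[r]\,du=\log Z_t$. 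The paper's version, in exchange, stays within the same cumulant calculus as Step 1. Your argument would serve equally well in Step 3 of the proof of Theorem~\ref{thm:strict}, where the paper reuses this variance bound.
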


\begin{proof}
Let $\Lambda(s):=\log\mathbb{E}_{p_0}[e^{s\,r(x,Y)}]$.
Then $\Lambda'(s)=\mathbb{E}_{q_s}[r]$ and $\Lambda''(s)=\operatorname{Var}_{q_s}(r)$.
Since $\log(q_s/p_0)=sr-\Lambda(s)$, we have $\operatorname{KL}(q_s\|p_0)=s\Lambda'(s)-\Lambda(s)$.
Writing $F(s):=s\Lambda'(s)-\Lambda(s)$, note $F'(s)=s\Lambda''(s)=s\,\operatorname{Var}_{q_s}(r)$ and $F(0)=0$, so $\operatorname{KL}(q_s\|p_0)=\int_0^s t\,\operatorname{Var}_{q_t}(r)\,dt$.

For the lower bound, let $V(t):=\operatorname{Var}_{q_t}(r)$.
Then $V'(t)=\mathbb{E}_{q_t}[(r-\mathbb{E}_{q_t}r)^3]$.
Since $r\in[0,R]$, we have $|r-\mathbb{E}_{q_t}r|\le R$, so $(r-\mathbb{E}_{q_t}r)^3\ge -R\,(r-\mathbb{E}_{q_t}r)^2$, giving $V'(t)\ge -R\,V(t)$ and hence $V(t)\ge \sigma^2_0\,e^{-Rt}$ by Gr\"onwall's inequality.
Therefore $\operatorname{KL}(q_s\|p_0)\ge \sigma^2_0\int_0^s t\,e^{-Rt}\,dt = \frac{\sigma^2_0}{R^2}(1-e^{-Rs}(1+Rs))$.
\end{proof}

\subsection{KL-regularized optimum (Equation~\eqref{eq:qstar})}
\label{app:proof-qstar}

\begin{proof}
Write $\mu:=p_0(\cdot\mid x)$ and $\Omega:=\mathbb{V}^T$.
Introducing a Lagrange multiplier $\lambda$ for the constraint $\sum_{y_{1:T}\in\Omega} q(y_{1:T})=1$:
\[
  \mathcal{L}_\lambda(q)
  = \sum_{y_{1:T}\in\Omega} q(y_{1:T})\!\left(\frac{r(x,y_{1:T})}{\tau} - \log\frac{q(y_{1:T})}{\mu(y_{1:T})}\right) + \lambda\!\left(\sum_{y_{1:T}\in\Omega}q(y_{1:T})-1\right).
\]
Setting $\partial\mathcal{L}_\lambda/\partial q(y_{1:T})=0$ for each $y_{1:T}\in\Omega$:
\[
  \frac{r(x,y_{1:T})}{\tau} - \log\frac{q(y_{1:T})}{\mu(y_{1:T})} - 1 + \lambda = 0
  \qquad\Longrightarrow\qquad
  q(y_{1:T}) = \mu(y_{1:T})\,e^{r(x,y_{1:T})/\tau + \lambda - 1}.
\]
Normalizing: $\sum_{y_{1:T}\in\Omega} q(y_{1:T})=e^{\lambda-1}\sum_{y_{1:T}\in\Omega} \mu(y_{1:T})\,e^{r(x,y_{1:T})/\tau}=e^{\lambda-1}\,Z(x)=1$, so $e^{\lambda-1}=1/Z(x)$.
Substituting back gives $p^\star_\tau(y_{1:T}\mid x) = p_0(y_{1:T}\mid x)\,e^{r(x,y_{1:T})/\tau}/Z(x)$.
Uniqueness follows from strict concavity: $-\operatorname{KL}(q\|\mu)$ is strictly concave in~$q$ (since $q\mapsto q\log q$ is strictly convex) and $\mathbb{E}_q[r]$ is linear.
\end{proof}

\subsection{Optimal per-step policy (Equation~\eqref{eq:pi-star})}
\label{app:proof-pi-star}

\begin{proof}
Fix $t\in\{1,\dots,T\}$, a prompt~$x$, and a prefix~$y_{1:t-1}$.
From Equation~\eqref{eq:qstar}, the joint probability of prefix~$y_{1:t-1}$ and next token~$y_t$ under~$p^\star_\tau$ is
\[
  p^\star_\tau(y_{1:t-1},y_t\mid x)
  = \sum_{y_{t+1:T}} p^\star_\tau(y_{1:T}\mid x)
  = \frac{1}{Z(x)}\sum_{y_{t+1:T}} p_0(y_{1:T}\mid x)\,e^{r(x,y_{1:T})/\tau}.
\]
Factor $p_0$ autoregressively as $p_0(y_{1:T}\mid x)=p_0(y_{1:t-1}\mid x)\,\pi_0(y_t\mid x,y_{1:t-1})\,p_0(y_{t+1:T}\mid x,y_{1:t})$.
The prefix term and $\pi_0(y_t\mid x,y_{1:t-1})$ do not depend on $y_{t+1:T}$, so
\[
  p^\star_\tau(y_{1:t-1},y_t\mid x)
  = \frac{p_0(y_{1:t-1}\mid x)\,\pi_0(y_t\mid x,y_{1:t-1})}{Z(x)}
  \underbrace{\sum_{y_{t+1:T}} p_0(y_{t+1:T}\mid x,y_{1:t})\,e^{r(x,y_{1:T})/\tau}}_{=\,Z(x,\,y_{1:t-1},\,y_t)},
\]
where $Z(x,y_{1:t-1},y_t)$ is the soft value in Equation~\eqref{eq:soft-value}.
Summing over $y_t$ gives the prefix marginal:
\[
  p^\star_\tau(y_{1:t-1}\mid x)
  = \frac{p_0(y_{1:t-1}\mid x)}{Z(x)}\sum_{y'\in \mathbb{V}}\pi_0(y'\mid x,y_{1:t-1})\,Z(x,y_{1:t-1},y').
\]
Dividing the joint by the prefix marginal:
\[
  \pi^\star_\tau(y_t\mid x,y_{1:t-1})
  = \frac{\pi_0(y_t\mid x,y_{1:t-1})\,Z(x,y_{1:t-1},y_t)}{\sum_{y'\in \mathbb{V}}\pi_0(y'\mid x,y_{1:t-1})\,Z(x,y_{1:t-1},y')}.
\]
Since $\pi_0(y\mid x,y_{1:t-1})=e^{\ell_y(x,y_{1:t-1})}/\sum_{y'}e^{\ell_{y'}(x,y_{1:t-1})}$, the base-model normalizer cancels and
\[
  \pi^\star_\tau(y_t\mid x,y_{1:t-1})
  = \frac{e^{\ell_{y_t}(x,y_{1:t-1})+\log Z(x,\,y_{1:t-1},\,y_t)}}{\sum_{y'\in \mathbb{V}}e^{\ell_{y'}(x,y_{1:t-1})+\log Z(x,\,y_{1:t-1},\,y')}}
  = \operatorname{softmax}\!\bigl(\ell(x,y_{1:t-1})+\log Z(x,y_{1:t-1},\cdot)\bigr)_{y_t}. \qedhere
\]
\end{proof}

\subsection{Exact factorization implies fixed-bias optimality (Corollary~\ref{cor:exact-factorization})}
\label{app:proof-exact-factorization}

\begin{proof}
Fix $x$, $t$, and a reachable prefix $y_{1:t-1}$.
By Equation~\eqref{eq:pi-star},
\[
\pi^\star_\tau(\cdot \mid x, y_{1:t-1})
=\operatorname{softmax}\!\bigl(\ell(x,y_{1:t-1})+\log Z(x,y_{1:t-1},\cdot)\bigr).
\]
Under the exact-factorization assumption,
\[
\log Z(x,y_{1:t-1},\cdot)=\delta+c_t(x,y_{1:t-1})\,\mathbf{1},
\]
so
\[
\pi^\star_\tau(\cdot \mid x, y_{1:t-1})
=\operatorname{softmax}\!\bigl(\ell(x,y_{1:t-1})+\delta+c_t(x,y_{1:t-1})\,\mathbf{1}\bigr).
\]
Since $\operatorname{softmax}(u+c\,\mathbf{1})=\operatorname{softmax}(u)$ for any $u$ and scalar $c$,
\[
\pi^\star_\tau(\cdot \mid x, y_{1:t-1})
=\operatorname{softmax}\!\bigl(\ell(x,y_{1:t-1})+\delta\bigr)
=\pi_\delta(\cdot \mid x, y_{1:t-1}).
\]
Thus the fixed-bias policy matches the optimal per-step policy at every reachable prefix.
By Proposition~\ref{prop:equivalence}, equality of all per-step conditionals implies equality of the induced trajectory laws:
\[
p_\delta(\cdot\mid x)=p^\star_\tau(\cdot\mid x). \qedhere
\]
\end{proof}

\subsection{Reward gap (Theorem~\ref{thm:strict})}\label{app:proof-reward-gap}

\begin{proof}
We first establish the bound for a fixed prompt~$x$, then average over~$\rho$.
Write $J_x(q):=\mathbb{E}_{Y\sim q(\cdot\mid x)}[r(x,Y)]$ for the per-prompt reward and $\sigma^2_0(x):=\operatorname{Var}_{Y\sim p_0(\cdot\mid x)}(r(x,Y))$.
The proof combines three ingredients: a KL bound between $p_{\widehat{\delta}}$ and~$p^\star_\tau$, Pinsker's inequality, and a lower bound on the improvement of~$p^\star_\tau$ over~$p_0$.

\medskip\noindent\emph{Step 1: KL bound (per-prompt).}
By Equation~\eqref{eq:pi-star}, $\pi^\star_\tau(\cdot\mid x,y_{1:t-1})=\operatorname{softmax}(\ell(x,y_{1:t-1})+\log Z(x,y_{1:t-1},\cdot))$ and $\pi_{\widehat{\delta}}(\cdot\mid x,y_{1:t-1})=\operatorname{softmax}(\ell(x,y_{1:t-1})+\widehat{\delta})$.
Definition~\ref{def:approx-fact} gives $\inf_c\|\log Z(x,y_{1:t-1},\cdot)-(\widehat{\delta}+c\,\mathbf{1})\|_2\le\varepsilon$ for every reachable prefix.
Since adding $c\,\mathbf{1}$ to a softmax argument does not change the distribution, Lemma~\ref{lem:softmax-kl} yields
\[
  \operatorname{KL}\!\bigl(\pi_{\widehat{\delta}}(\cdot\mid x,y_{1:t-1})\,\|\,\pi^\star_\tau(\cdot\mid x,y_{1:t-1})\bigr)
  \le \frac{\varepsilon^2}{4}
\]
at every reachable prefix.
The chain rule for autoregressive KL~\citep{cover2006elements} gives
\[
  \operatorname{KL}\!\bigl(p_{\widehat{\delta}}(\cdot\mid x)\,\|\,p^\star_\tau(\cdot\mid x)\bigr)
  = \mathbb{E}_{Y\sim p_{\widehat{\delta}}(\cdot\mid x)}\!\left[\sum_{t=1}^T \operatorname{KL}\!\bigl(\pi_{\widehat{\delta}}(\cdot\mid x,y_{1:t-1})\,\|\,\pi^\star_\tau(\cdot\mid x,y_{1:t-1})\bigr)\right]
  \le \frac{T\varepsilon^2}{4}.
\]

\medskip\noindent\emph{Step 2: Per-prompt reward gap via Pinsker.}
By Pinsker's inequality~\citep{pinsker1964information,tsybakov2009nonparametric}, $\operatorname{TV}\!\bigl(p_{\widehat{\delta}}(\cdot\mid x),p^\star_\tau(\cdot\mid x)\bigr)\le\varepsilon\sqrt{T/8}$.
Since $r\in[0,R]$:
\[
  \bigl|J_x(p^\star_\tau)-J_x(p_{\widehat{\delta}})\bigr|
  \le R\cdot\operatorname{TV}\!\bigl(p_{\widehat{\delta}}(\cdot\mid x),\,p^\star_\tau(\cdot\mid x)\bigr)
  \le R\,\varepsilon\sqrt{\frac{T}{8}}.
\]

\medskip\noindent\emph{Step 3: Per-prompt improvement of $p^\star_\tau$ over $p_0$.}
Define the exponential tilting path $q_s(y_{1:T}\mid x):=p_0(y_{1:T}\mid x)\,e^{s\,r(x,y_{1:T})}/Z_s$ for $s\ge 0$, and the log-partition function $\Lambda_x(s):=\log\mathbb{E}_{Y\sim p_0(\cdot\mid x)}[e^{s\,r(x,Y)}]$.
Then $\Lambda_x'(s)=\mathbb{E}_{q_s(\cdot\mid x)}[r]=J_x(q_s)$ and $\Lambda_x''(s)=\operatorname{Var}_{q_s(\cdot\mid x)}(r)$, so
\[
  J_x(q_s)-J_x(p_0)=\int_0^s\operatorname{Var}_{q_t(\cdot\mid x)}(r)\,dt.
\]
Let $V_x(t):=\operatorname{Var}_{q_t(\cdot\mid x)}(r)$.
Since $r\in[0,R]$, Gr\"onwall's inequality gives $V_x(t)\ge \sigma^2_0(x)\,e^{-Rt}$ (cf.\ Lemma~\ref{lem:kl-lower}).
Therefore, setting $s=1/\tau$:
\[
  J_x(p^\star_\tau)-J_x(p_0)\ge\frac{\sigma^2_0(x)}{R}\bigl(1-e^{-R/\tau}\bigr).
\]

\medskip\noindent\emph{Per-prompt bound.}
Combining Steps~2 and~3:
\[
  J_x(p_{\widehat{\delta}})-J_x(p_0) \ge \frac{\sigma^2_0(x)}{R}\bigl(1-e^{-R/\tau}\bigr) - R\,\varepsilon\sqrt{T/8}.
\]

\medskip\noindent\emph{Averaging over prompts.}
Taking $\mathbb{E}_{x\sim\rho}$ of both sides and using linearity:
\[
  J(p_{\widehat{\delta}})-J(p_0)
  \ge \frac{V_0}{R}\bigl(1-e^{-R/\tau}\bigr) - R\,\varepsilon\sqrt{T/8},
  \qquad V_0 := \mathbb{E}_{x\sim\rho}\!\bigl[\sigma^2_0(x)\bigr]. \qedhere
\]
\end{proof}

\end{document}